\documentclass[table]{gtech}
\PassOptionsToPackage{table,usenames,dvipsnames}{xcolor}
\usepackage{xcolor}
\usepackage{amssymb}
\usepackage{multirow}
\usepackage{comment}
\usepackage{bigdelim}
\usepackage{longtable}
\usepackage{tabularray}
\usepackage{wrapfig}
\usepackage[most]{tcolorbox}
\usepackage{url}
\usepackage{float}
\usepackage{datatool}
\usepackage{enumitem}
\usepackage{subcaption} 
\usepackage{amsthm}
\newtheorem{theorem}{Theorem}
\usepackage{pifont}
\usepackage[linesnumbered,ruled,vlined]{algorithm2e}
\usepackage[justification=centering]{caption}

\RequirePackage{tgpagella} 
\RequirePackage{mathpazo}  
\RequirePackage{inconsolata}
\usepackage{makecell}
\usepackage{tcolorbox}

\renewcommand{\title}[1]{\newcommand{\titlelist}{{\huge\fontfamily{optimistic}\selectfont #1}}}

\newcommand{\model}{\texttt{Ring-1T}}
\definecolor{prompt}{HTML}{5f84e4}
\definecolor{img}{HTML}{820100}
\newcommand{\best}[1]{{\textbf{#1}}}
\newcommand{\second}[1]{{\underline{#1}}}

\usepackage{arydshln}
\definecolor{CQColor}{rgb}{0.0,0.0,1.0} 
\definecolor{TABLE_LINE}{rgb}{0.88,0.96,1.0}

\usepackage{colortbl}
\usepackage{amssymb}
\usepackage{pifont}
\usepackage{booktabs,multirow}
\usepackage{makecell}
\usepackage{tabulary}
\usepackage{fontawesome5}
\usepackage{bbding}
\usepackage{multicol}

\newlength\savewidth

\title{\textcolor[HTML]{0369ff}{Every Step Evolves}: Scaling Reinforcement Learning for Trillion-Scale Thinking Model}

\author[*]{Ling Team, Inclusion AI}

\contribution[*]{See Contributions section (Sec.~\ref{sec:contri}) for full author list.}

\abstract{\fontsize{11pt}{12pt} 
We present \model{}, the first open-source, state-of-the-art thinking model with a trillion-scale parameter. It features 1 trillion total parameters and activates approximately 50 billion per token. Training such models at a trillion-parameter scale introduces unprecedented challenges, including train-inference misalignment, inefficiencies in rollout processing, and bottlenecks in the RL system. To address these, we pioneer three interconnected innovations: (1) IcePop stabilizes RL training via token-level discrepancy masking and clipping, resolving instability from training-inference mismatches; (2) C3PO++ improves resource utilization for long rollouts under a token budget by dynamically partitioning them, thereby obtaining high time efficiency; and (3) ASystem, a high-performance RL framework designed to overcome the systemic bottlenecks that impede trillion-parameter model training. \model{} delivers breakthrough results across critical benchmarks: 93.4 on AIME-2025, 86.72 on HMMT-2025, 2088 on CodeForces, and 55.94 on ARC-AGI-1. Notably, it attains a silver medal-level result on the IMO-2025, underscoring its exceptional reasoning capabilities. By releasing the complete 1T parameter MoE model to the community, we provide the research community with direct access to cutting-edge reasoning capabilities. This contribution marks a significant milestone in democratizing large-scale reasoning intelligence and establishes a new baseline for open-source model performance. 
}
\date{Oct 22, 2025\vspace{-1mm}}
\gtechdata[Code]{\url{https://github.com/inclusionAI/Ring-V2}\vspace{-1mm}}
\gtechdata[Model]{\url{https://huggingface.co/inclusionAI/Ring-1T}}

\begin{document}
\maketitle

\begin{figure*}[h]
\centering
\includegraphics[width=0.99\textwidth]{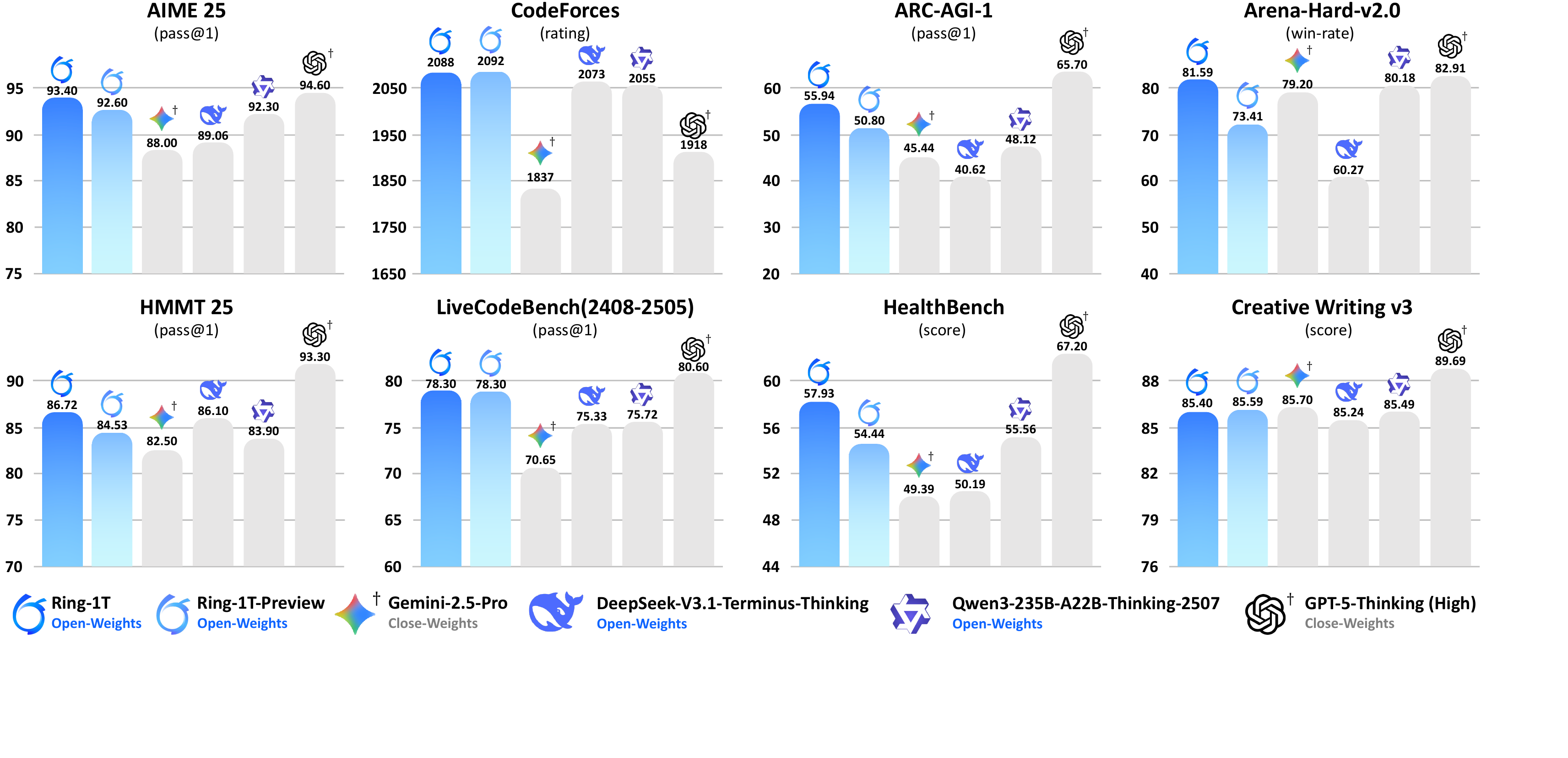}
\caption{Performance comparison of \model{} and existing open-weights and close-weights$^{\dagger}$ models across benchmarks.}
\label{fig:ring-lite-performance}
\end{figure*}

\section{Introduction}
\label{sec:intro}

Artificial intelligence is undergoing a pivotal transition: Large Language Models (LLMs) are advancing beyond static corpora of human knowledge, becoming dynamic processors that transform information into actionable insights and understanding~\citep{kimiteam2025kimik2openagentic,deepseekai2025deepseekr1}. This progression towards more general intelligence is empirically validated by their core capability – complex, adaptive problem-solving. 
Recent breakthroughs in solving high-difficulty human competition problems provide concrete evidence of significantly advanced reasoning abilities in large language models. For instance, models~\citep{gpt5, qwen3max} have achieved 100\% accuracy on the AIME-2025~\citep{aime} and HMMT-2025~\footnote{https://www.hmmt.org/www/archive/problems}, and reached medal-level performance at the International Mathematical Olympiad (IMO)~\citep{gpt5}—a hallmark of sophisticated human intellect. This evolution beyond static knowledge repositories is driven by training on trillions of tokens across diverse domains, coupled with reinforcement learning-optimized reasoning techniques~\citep{openai2024openaio1card, deepseekai2025deepseekr1} that enable models to dynamically scale their capabilities with thinking effort, pointing toward higher levels of general intelligence.

While related work~\citep{deepseekai2025deepseekr1,glm46} has made valuable contributions to the open-source community, the frontier of trillion-parameter thinking models remains uncharted territory. Scaling to this level introduces formidable challenges, such as severe training instability and prohibitive computational costs. In this work, we introduce \model{}, a novel Mixture-of-Experts (MoE) thinking model scaled to unprecedented size—and demonstrate breakthrough methodologies for efficient trillion-parameter training. By solving fundamental stability and efficiency challenges at this scale, we enable robust large-scale reasoning training while providing extensive implementation insights.

Our \model{}, the first open-source reasoning model with one trillion total parameters, is built upon the Ling 2.0~\citep{lingv2} architecture and trained from the Ling-1T-base. 
With approximately 50 billion activated parameters per token, \model{} achieves state-of-the-art performance across multiple challenging benchmarks—despite relying solely on natural language reasoning capabilities. It significantly outperforms existing open-source models, achieving scores of 93.4 on AIME-2025, 86.72 on HMMT-2025, 2088 on CodeForces, and 55.94 on ARC-AGI-v1. Remarkably, in the IMO-2025 evaluation within AWorld~\footnote{https://github.com/inclusionAI/AWorld}, \model{} achieved a silver medal-level result by correctly solving four problems and partially proving Problem 2, all within a single submission, and without relying on code generation or external symbolic solvers.
Realizing this breakthrough required addressing fundamental challenges in trillion-scale RL training. We pioneered three interconnected innovations:
\begin{itemize}
    \item \textbf{IcePop} eliminates catastrophic training-inference misalignment in RL training by clipping excessive-discrepancy tokens. This selective correction pops out unstable contributions while preserving efficient updates, thereby stabilizing training without slowing inference.
    \item \textbf{C3PO++} introduces a budget-controlled rollout scheduling mechanism that eliminates rollout-stage bottlenecks. Thus, it avoids inefficient single-pass processing of oversized sequences, reducing computational overhead while enabling their efficient reuse through batched continuation.
    \item \textbf{ASystem} is a high-performance reinforcement learning (RL) framework designed for large-scale asynchronous training. It adopts a SingleController + SPMD (Single Program, Multiple Data) to enable fully asynchronous operations, multi-phase masking acceleration, and efficient data packing/sharding. 
\end{itemize}

The structure of this paper is organized as follows: Section~\ref{sec:method} describes our comprehensive training methodology, which includes Long Chain-of-Thought Supervised Fine-Tuning (Long-CoT SFT) and large-scale reinforcement learning (RL), including our key algorithmic contributions, IcePop and C3PO++, as well as the underlying training framework, Asystem. Finally, Section~\ref{sec:eval} presents a thorough evaluation of our model's performance against leading open-weights and closed-weights models on established benchmarks.
\section{Approach}
\label{sec:method}
The \model{} model was developed via a multi-stage pipeline, commencing with long-CoT SFT and advancing through reasoning and general RL phases. 
While the complete pipeline is integral to the model's performance, the primary focus of this report is on RL components. We first summarize the long-CoT SFT process that primes the model for RL in Section~\ref{sec:sft}. 
The core of our discussion then turns to the novel RL algorithm, which is elaborated upon in Section~\ref{sec:rl}.
Additionally, we introduce ASystem in Section~\ref{subsec:asys}, the distributed system that enabled the large-scale RL training necessary for this project.

\subsection{Training Pipeline}
\label{sec:pipeline}

\begin{figure}[!hbt]
    \centering
    \includegraphics[width=0.86\linewidth]{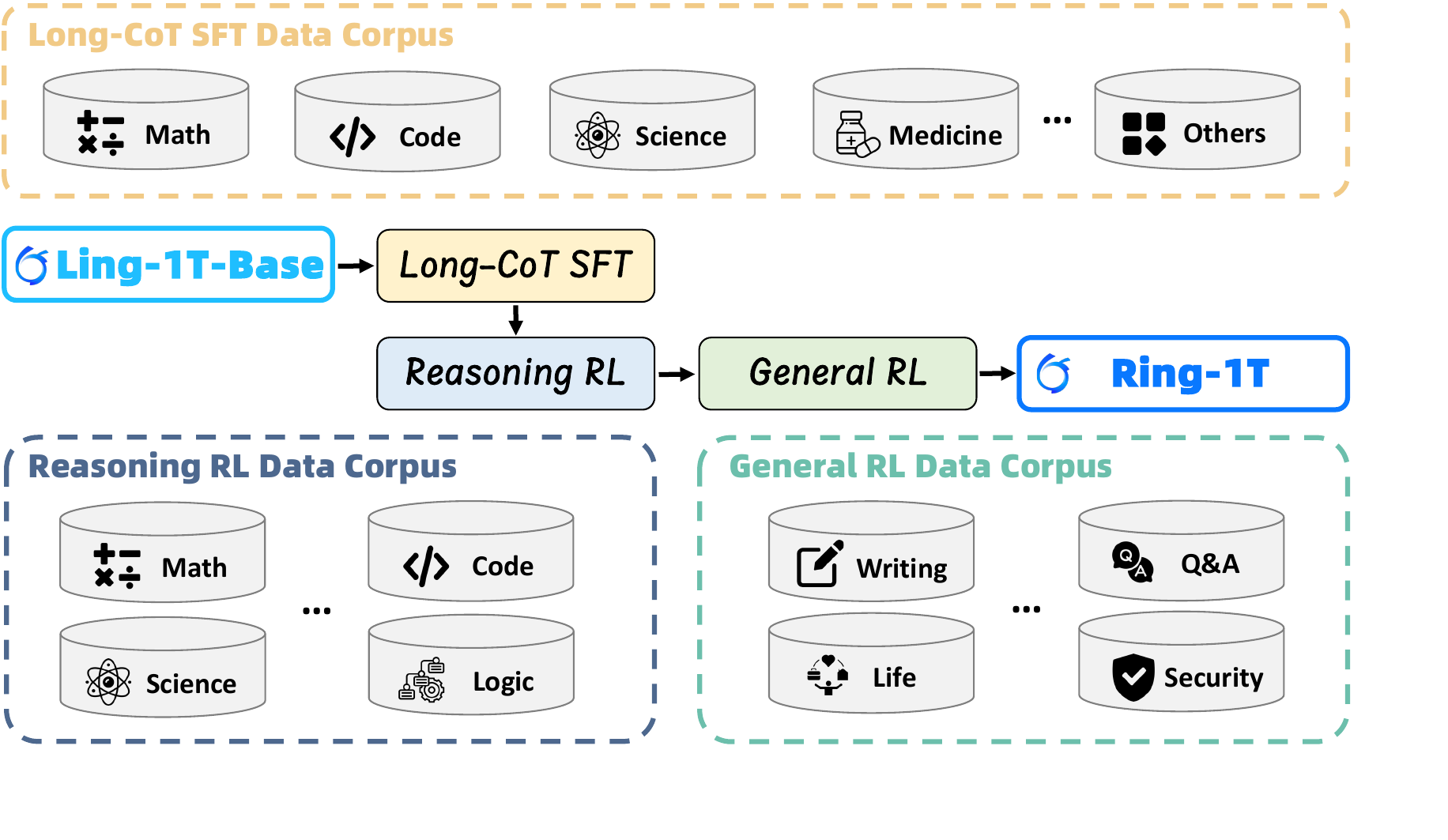}
    \caption{The training pipeline of \model.}
    \label{fig:training_framework}
\end{figure}

We use Ling-1T-base model~\citep{lingv2}, a novel Mixture-of-Experts model with a total of \textbf{1 Trillion} parameters and an activation of \textbf{50 Billion} parameters, as our base model. The training of \model{} consists of three stages, comprising long-CoT SFT, reasoning-oriented RL, and general-oriented RL, to cultivate a powerful thinking model.
\begin{itemize}
    \item \textbf{Long-CoT SFT:} We collect and synthesize a large amount of multi-domain reasoning trajectory data covering mathematics, code, science, etc. Through large-scale supervised fine-tuning, the model learns general reasoning patterns and domain-specific reasoning skills, establishing a solid foundation for large-scale RL training.
    \item \textbf{Reasoning RL:} We construct a comprehensive, challenging, and high-quality RL dataset encompassing math, code, science, and logic tasks with verifiable outcomes. 
    The model's comprehensive reasoning performance is enhanced via \textit{RLVR (Reinforcement Learning from Verifiable Rewards)}. This process involves sampling extensive reasoning trajectories and refining the policy using verifiable rewards, which are provided by carefully designed multi-domain verifiers.
    \item \textbf{General RL:} Following large-scale reinforcement learning on verifiable tasks, we conduct a second RL stage focused on general tasks. This phase employs \textit{RLHF (Reinforcement Learning from Human Feedback)} to recalibrate the model's capability distribution, preserving its core reasoning strength while enhancing human alignment, instruction following, creative writing, safety, and overall usability.
    
\end{itemize}

\subsection{Long-CoT SFT}
\label{sec:sft}

In this stage, we aim to endow the base model with fundamental long-chain reasoning abilities through Long Chain-of-Thought Supervised Fine-Tuning (Long-CoT SFT). This process serves as the foundation for subsequent reinforcement learning, equipping it with the capability to sustain coherent, multi-step thinking processes for complex problems.

\paragraph{Data Collection} A comprehensive, high-quality dataset featuring Long-CoT reasoning patterns was constructed to effectively activate the base model’s reasoning capability. The query pool originates from a tripartite sourcing strategy: open-source repositories, expert manual generation, and LLM-based synthesis. 
Following data collection, rigorous data-cleansing protocols were applied. 
To ensure the resulting dataset's quality, we designed a rigorous data processing pipeline comprising four sequential steps: 1) Deduplication, where we employed exact matching to remove repetitive samples; 2) Harmful Content Filtering, where data samples containing toxic or harmful information were identified and purged; 3) Data Decontamination, where we utilized both hashing and exact string matching techniques to detect and eliminate any samples that overlap with existing benchmarks; and 4) Low-Quality Sample Filtering, which removeed various noise sources including invisible control codes and extraneous Unicode characters. The final data is predominantly composed of four domains: Mathematics (46\%), STEM (26\%), Code (20\%), and Others (8\%).

\paragraph{Training} We conduct Long-CoT SFT on our Ling-1T-base model~\citep{lingv2} to obtain a model with preliminary thinking capabilities. The training data are packed into 64k-length sequences. For this stage, the model was trained for 3 epochs with a learning rate of $2 \times 10^{-4}$. We employed a cosine decay scheduler with 30 warmup steps and applied a weight decay of 0.1 throughout the process.

\subsection{Large-Scale Reinforcement Learning}
\label{sec:rl}
Reinforcement learning (RL) is the critical next step for translating knowledge from pre-training and SFT into advanced thinking. To enable RL at the unprecedented scale of \textbf{1 Trillion} parameters, we developed a novel RL algorithm and a specialized infrastructure, Asystem. This integrated solution overcomes fundamental challenges in training efficiency and stability, allowing us to systematically explore the model's complex problem-solving capabilities.

\subsubsection{RL Data}
\label{subsec:rl-data}

High-quality and diverse data are critical for effective reinforcement learning. To this end, we introduce a carefully curated, multi-domain dataset spanning five core areas: math, code, science, logic, and general domains:
\begin{itemize}
    \item \textbf{Math:} We extend the dataset from~\cite{team2025ring} with mathematically rigorous problems from authoritative sources. Our curation ensures completeness, high complexity, and verifiable solutions, yielding a high-quality corpus for large-scale reinforcement learning.
    
    \item \textbf{Code:} Aside from the dataset employed in~\cite{team2025ring}, we develop a multi-phase workflow for synthesizing, validating, quality-scoring, and selecting additional test cases. This process ensures that each problem is equipped with a sufficient number of high-quality test cases. The final dataset contains programming problems with verified correct solutions and carefully tested cases.
     
    \item \textbf{Science:} We developed a crowdsourced science dataset of high-difficulty problems spanning physics, chemistry, and biology. To ensure complexity for reinforcement learning, all multiple-choice questions were reformatted into an open-ended format. 
    For organic chemistry, we established a dedicated image-semantization pipeline that converts visual information such as molecular structures into structured textual descriptions. 
    Finally, we applied a Pass-rate filtering strategy to select only the highest-quality items.
    
    \item \textbf{Logic:} Our logic reasoning dataset spans five domains: visual pattern induction ~\citep{Chollet2025}, grid puzzles (Sudoku), pathfinding (mazes), arithmetic reasoning (24 Game), and propositional logic (Knights and Knaves). We synthesized problems by integrating public resources such as \cite{Hodel2024}, \cite{Li2025InternBootcamp}, and \cite{Liu2025SynLogic} into an in-house game generator, enabling scalable and controlled creation. A quality control process ensures each task is solvable and non-trivial during both generation and post-processing. The final curated collection balanced across domains and complexity levels for reinforcement learning.

    \item \textbf{General Data:} We constructed a comprehensive dataset for general reasoning by aggregating problems from two primary sources: public repositories and real-world user interactions. From public sources, we incorporated established general datasets including Magpie~\citep{xu2024magpiealignmentdatasynthesis}, WMT~\citep{feng2025mtr1zero}, RLVR-IFEval~\footnote{https://huggingface.co/datasets/allenai/RLVR-IFeval}, and AutoIF~\footnote{https://huggingface.co/datasets/Post-training-Data-Flywheel/AutoIF-instruct-61k}. To enhance practical alignment, we further integrated real-world user preference data such as arena-human-preference-100k and arena-human-preference-140k~\footnote{https://huggingface.co/datasets/lmarena-ai}. Additionally, we augmented this collection with problems sourced from social media platforms such as Zhihu and StackOverflow. 
\end{itemize}

Finally, we employ a multi-stage curation pipeline involving parsing, reformulation, and deduplication, with quality assured by a dual scoring system of LLMs and rule-based metrics. Furthermore, fine-grained metadata annotations on each sample enable dynamic sampling and cross-domain blending, a strategy that significantly improves training efficiency and model performance on complex tasks.

\begin{figure}
    \centering
    \includegraphics[width=0.96\linewidth]{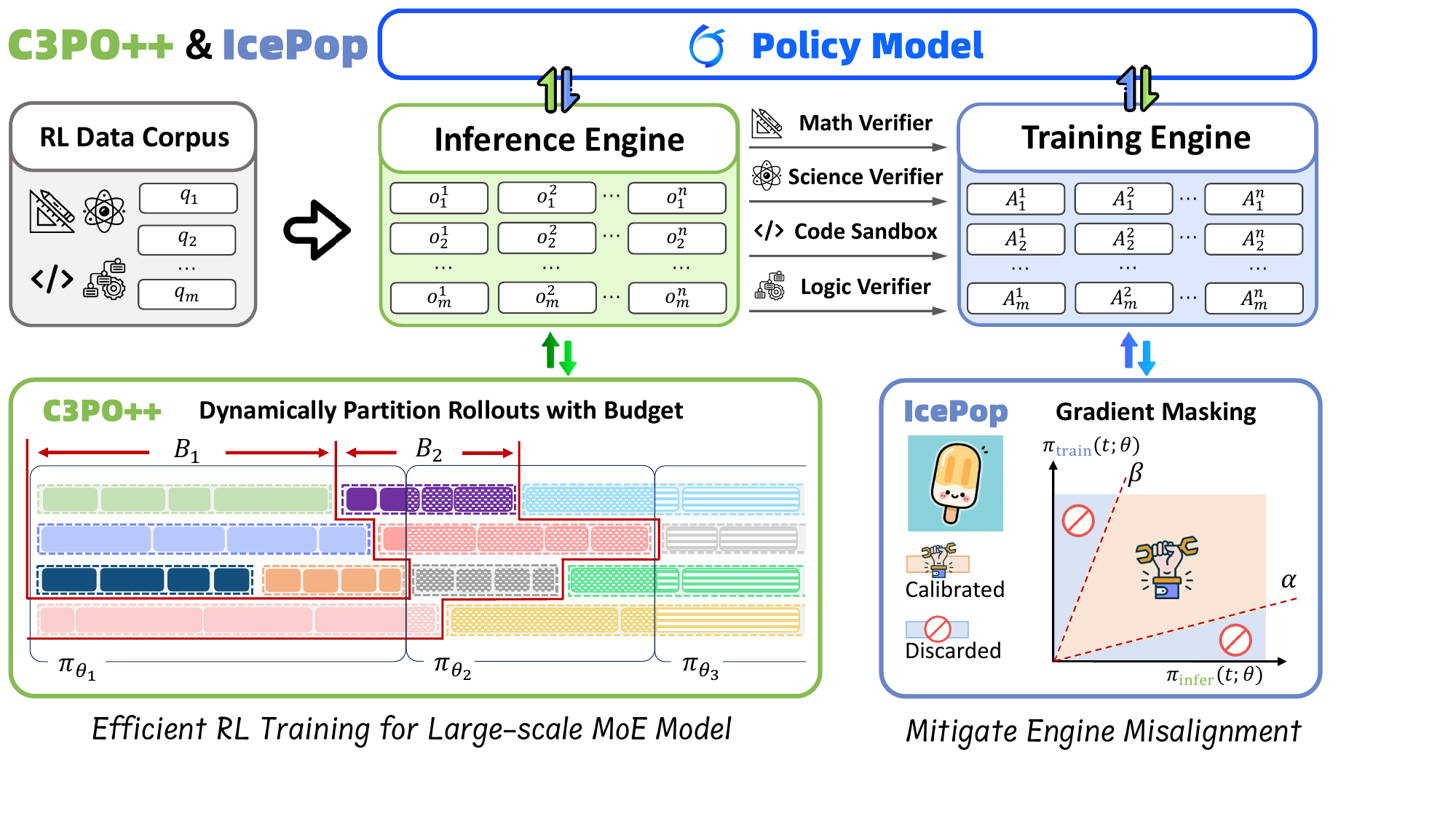}
    \caption{We integrate C3PO++ and IcePop into \model, which enhances both training efficiency and effectiveness of RL.}
    \label{fig:rl_overview}
\end{figure}

\subsubsection{IcePop: Discard All Noisy Gradient Updates}
Contemporary reinforcement learning training frameworks typically utilize distinct engines for model training and inference processes. Throughout our experiments, we have observed that this separation may lead to discrepancies in probability calculations, potentially introducing instability to RL training. This problem is particularly pronounced in the training of MoE models with RL due to the inherent usage of the dynamic routing mechanism. Additionally, in long CoT settings, these discrepancies can gradually accumulate across iterations and become further amplified.

\begin{theorem}{(Compounding Probability Discrepancy)}\label{theo:prob_dis}
Let $\pi_{\mathrm{infer}}(\cdot;\theta)$ and $\pi_{\mathrm{train}}(\cdot;\theta)$ be the policy model loaded by inference and training engines, and $\delta_t \;=\; D_{\mathrm{KL}}\!\big(\pi_{\mathrm{infer}}(\cdot;\theta_t)\,\|\,\pi_{\mathrm{train}}(\cdot;\theta_t)\big)$ be probability discrepancy at step $t$. 
Under certain conditions and a step size $\mu>0$,
there exist a constant $\eta>0$ such that $\delta_{t+1} \;\ge\; \big(1 + \tfrac{\eta}{2}\,\mu\big)\,\delta_t.$ 
\end{theorem}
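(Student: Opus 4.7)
The plan is to apply a first-order Taylor expansion to $f(\theta_{t+1})$ around $\theta_t$, where $f(\theta) := D_{\mathrm{KL}}(\pi_{\mathrm{infer}}(\cdot;\theta)\,\|\,\pi_{\mathrm{train}}(\cdot;\theta))$, and then to show that the surrogate policy-gradient update has a systematic component aligned with $\nabla f(\theta_t)$. Writing the RL step as $\theta_{t+1}=\theta_t+\mu g_t$, where $g_t$ is the importance-weighted policy gradient computed in the training engine on rollouts drawn from the inference engine, smoothness of $f$ yields
\[
\delta_{t+1} \;=\; f(\theta_{t+1}) \;=\; \delta_t \;+\; \mu\,\langle\nabla f(\theta_t),\,g_t\rangle \;+\; O(\mu^2).
\]
The inequality in the theorem then reduces to the single estimate $\langle\nabla f(\theta_t),g_t\rangle\ge\eta\,\delta_t$ for a trajectory-uniform $\eta>0$; the quadratic remainder is absorbed by halving the constant and assuming $\mu$ is small enough.

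The first step is to expand $\nabla f(\theta)$ explicitly. Using the score-function identity $\mathbb{E}_{x\sim\pi_{\mathrm{infer}}}[\nabla\log\pi_{\mathrm{infer}}(x;\theta)]=0$, a short calculation gives
\[
\nabla f(\theta) \;=\; \mathbb{E}_{x\sim\pi_{\mathrm{infer}}}\!\left[\nabla\log\pi_{\mathrm{infer}}(x;\theta)\,\log\frac{\pi_{\mathrm{infer}}(x;\theta)}{\pi_{\mathrm{train}}(x;\theta)} \;-\; \nabla\log\pi_{\mathrm{train}}(x;\theta)\right].
\]
The second step is to write the surrogate policy gradient in importance-sampling form,
\[
g_t \;=\; \mathbb{E}_{x\sim\pi_{\mathrm{infer}}}\!\left[\frac{\pi_{\mathrm{train}}(x;\theta_t)}{\pi_{\mathrm{infer}}(x;\theta_t)}\,A(x)\,\nabla\log\pi_{\mathrm{train}}(x;\theta_t)\right],
\]
and then to pair it with $\nabla f(\theta_t)$. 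The cross term involving $A(x)\,\log(\pi_{\mathrm{infer}}/\pi_{\mathrm{train}})$ contributes a factor proportional to $\delta_t$ whenever the advantage is, on average, positively correlated with the log-ratio — precisely the compounding mechanism observed in long-CoT MoE training, where the training engine systematically underestimates the chosen tokens and the reward signal therefore amplifies $\pi_{\mathrm{train}}$ in exactly the direction that increases $f$.

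The main obstacle will be formalizing the positive-alignment condition $\langle g_t,\nabla f(\theta_t)\rangle\ge\eta\,\delta_t$, since a priori $A$ and $\log(\pi_{\mathrm{infer}}/\pi_{\mathrm{train}})$ could be uncorrelated and the discrepancy could stay bounded. What I would encode as the ``certain conditions'' of the statement are (i) a routing-noise model in which the training-engine log-probability is an unbiased but noisier surrogate for the inference-engine log-probability, ensuring $\mathbb{E}[\nabla\log\pi_{\mathrm{train}}\cdot\log(\pi_{\mathrm{infer}}/\pi_{\mathrm{train}})]$ has definite sign; and (ii) a positive-correlation lower bound between the advantage and the log-ratio along the trajectory. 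Combined with uniform Lipschitz continuity of $\nabla\log\pi$ on both engines — needed to absorb the $O(\mu^2)$ remainder into a slightly smaller effective $\eta$ — these assumptions give the desired multiplicative growth $\delta_{t+1}\ge(1+\tfrac{\eta}{2}\mu)\,\delta_t$ for sufficiently small $\mu$. Pinning down $\eta$ sharply in terms of the routing variance and the reward scale is where most of the bookkeeping would lie, and motivates the token-level masking remedy introduced by IcePop.
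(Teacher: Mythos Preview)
Your high-level template --- apply $L$-smoothness of $\delta$ at the point $\theta_t+\mu g_t$ and reduce the claim to a lower bound on $\langle\nabla\delta(\theta_t),g_t\rangle$ --- is exactly the skeleton the paper uses. The real gap is in your expression for the update direction. You write
\[
g_t \;=\; \mathbb{E}_{x\sim\pi_{\mathrm{infer}}}\!\left[\frac{\pi_{\mathrm{train}}(x;\theta_t)}{\pi_{\mathrm{infer}}(x;\theta_t)}\,A(x)\,\nabla\log\pi_{\mathrm{train}}(x;\theta_t)\right],
\]
but by change of measure this equals $\mathbb{E}_{x\sim\pi_{\mathrm{train}}}[A(x)\,\nabla\log\pi_{\mathrm{train}}(x;\theta_t)]$, i.e.\ the \emph{corrected}, on-policy gradient. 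With this $g_t$ there is no systematic bias driving $\delta$ upward, and the compounding conclusion cannot follow; indeed, inserting the $\pi_{\mathrm{train}}/\pi_{\mathrm{infer}}$ factor is precisely the TIS/IcePop-style remedy whose absence the theorem is meant to diagnose. The object analyzed in the paper is the uncorrected vanilla-GRPO direction
\[
g_t \;=\; \mathbb{E}_{a\sim\pi_{\mathrm{infer}}}\!\big[A(a)\,\nabla_\theta\log\pi_{\mathrm{train}}(a;\theta_t)\big],
\]
which is then split as $g_t=g_t^\star+b_t$ with $g_t^\star=\mathbb{E}_{a\sim\pi_{\mathrm{train}}}[A(a)\,\nabla\log\pi_{\mathrm{train}}(a;\theta_t)]$. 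Rather than deriving alignment from a routing-noise model and an advantage/log-ratio correlation as you propose, the paper simply \emph{assumes} the bias-alignment inequality $\langle\nabla\delta(\theta_t),b_t\rangle\ge c\,\delta_t$, together with a bound $|\langle\nabla\delta(\theta_t),g_t^\star\rangle|\le M$ on the on-policy drift and a local gradient bound $\|g_t\|\le G$.

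There is a second, smaller slip. These assumptions yield only an affine recurrence $\delta_{t+1}\ge(1+c\mu)\,\delta_t-\kappa\mu$ with $\kappa=M+\tfrac{L}{2}\bar\mu G^2$; the additive loss $-\kappa\mu$ does not come solely from the $O(\mu^2)$ remainder but also from the on-policy drift $M$, so it cannot be absorbed by taking $\mu$ small. The paper closes this via a threshold argument: once $\delta_t\ge 2\kappa/c$, the affine bound implies $\delta_{t+1}\ge(1+\tfrac{c}{2}\mu)\,\delta_t$, which is the multiplicative growth stated in the theorem.
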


To address this compounding mismatch issue in MoE RL, we propose \textbf{IcePop}, a variant of GRPO that suppresses unstable training updates through double-sided masking calibration. IcePop only calibrates gradients within the acceptable region and discards all noisy gradient updates beyond that boundary, effectively aligning $\pi_{\textcolor{blue}{\text{train}}}$ with $\pi_{\textcolor{teal}{\text{infer}}}$. This is achieved through two key techniques:
\begin{itemize}
    \item \textbf{Double-sided calibration}: We calibrate token-level gradients within a region defined by lower and upper limits, well preserving the alignments between training and inference probabilities.
    \item \textbf{Masking}: We exclude tokens with excessive probability deviation from gradient computation, constraining gradient updates in a stable region.
\end{itemize}
Integrating these techniques yields the following objective function for IcePop:
\begin{equation}
\begin{aligned}
\mathcal{J}_{{\text{IcePop}}}(\theta) &= \mathbb{E}_{x \sim \mathcal{D}, \{y_i\}_{i=1}^G \sim \pi_{\textcolor{teal}{\text{infer}}}(\cdot \mid x; \theta_{\rm old})} \left[ \frac{1}{G} \sum_{i=1}^G \frac{1}{|y_i|} \sum_{t=1}^{|y_i|} \Big[\mathcal{M}\Bigl(\frac{\pi_{\textcolor{blue}{\text{train}}}(y_{i,t} \mid x, y_{i,<t};\theta_{\text{old}})}{\pi_{\textcolor{teal}{\text{infer}}}(y_{i,t} \mid x, y_{i,<t}; \theta_{\mathrm{old}})}; \alpha, \beta\Bigr) \right. \\ &\left. \qquad \qquad \qquad \qquad \quad \qquad \cdot \min \left( r_{i,t}\widehat{A}_{i,t}, \text{clip} \left( r_{i,t}, 1 - \varepsilon, 1 + \varepsilon \right) \widehat{A}_{i,t} \right)  - \gamma D_{\text{KL}}(\pi_{\theta}\|\pi_{\text{ref}})\right]\Bigg], &
\end{aligned}
\end{equation}
where $r_{i,t} = \frac{\pi_{\textcolor{blue}{\text{train}}}(y_{i,t} \mid x, y_{i,<t}; \ \theta)}{\pi_{\textcolor{blue}{\text{train}}}(y_{i,t} \mid x, y_{i,<t}; \ \theta_{\text{old}})}$, $\mathcal{M}(k)$ is the masking function defined as below:

\begin{equation}
\mathcal{M}(k) =\begin{cases} k & \text{if \ } k \in [\alpha, \beta], \\ 
0 & \text{otherwise}\end{cases}
\end{equation}
where $\alpha$,  $\beta$ controls the lower and upper limits.
Thus, the gradient of IcePop is
\begin{equation}
\label{eq:gradient_icepop}
\nabla_\theta \mathcal{J}_{\text{IcePop}}(\theta) \sim \mathbb{E}_{a \sim \textcolor{teal}{\pi_{\text{infer}}}(\theta_{\text{old}})} \Bigg[\mathcal{M}\Bigg(\frac{\textcolor{blue}{\pi_{\text{train}}}(a;\theta_{\text{old}})}{\textcolor{teal}{\pi_{\text{infer}}}(a;\theta_{\text{old}})}\Bigg ) \cdot \nabla_\theta \log \textcolor{blue}{\pi_{\text{train}}}(a;\theta) \cdot \hat{A} \cdot r(a)\Bigg)\Bigg].
\end{equation}
For $\dfrac{\textcolor{blue}{\pi_{\text{train}}}(a;\theta_{\text{old}})}{\textcolor{teal}{\pi_{\text{infer}}}(a;\theta_{\text{old}})} < \alpha$ and $\dfrac{\textcolor{blue}{\pi_{\text{train}}}(a;\theta_{\text{old}})}{\textcolor{teal}{\pi_{\text{infer}}}(a;\theta_{\text{old}})} > \beta$, IcePop discards all noisy gradients outside the region that may introduce potential instability into the training process. A more detailed introduction of IcePop could refer to our blog~\footnote{https://ringtech.notion.site/icepop}.

\subsubsection{C3PO++: Dynamically Partition Rollouts with Budget} 

\begin{figure}[!hbt]
    \centering
    \includegraphics[width=0.9\linewidth]{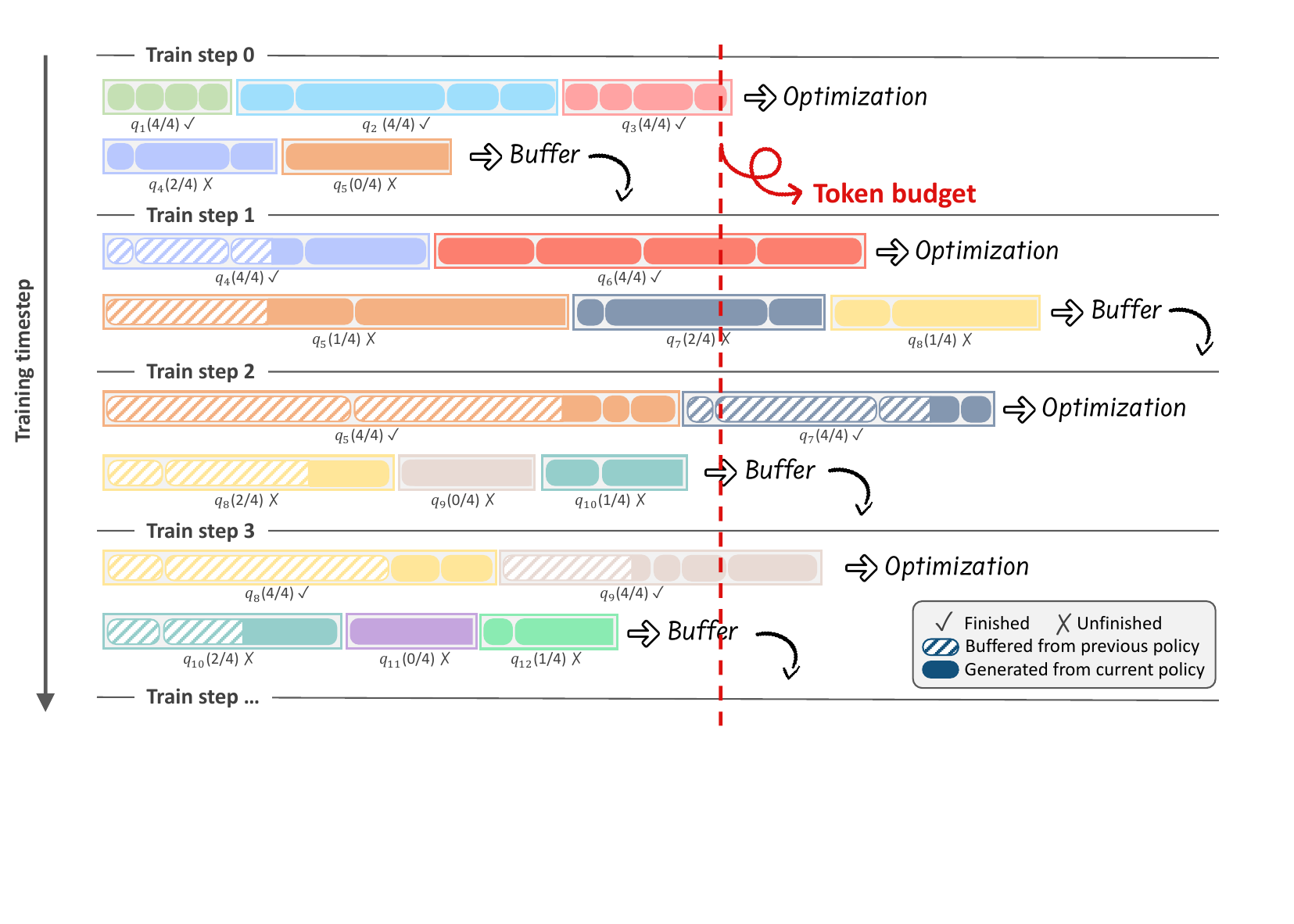}
    \caption{C3PO++ improves reinforcement learning efficiency for large thinking models by maintaining a rollout buffer across policy model versions. Once the rollout in an iteration reaches the token budget, optimization is performed; unfinished rollouts are stored in the buffer and resumed by the updated policy in the next iteration.}
    \label{fig:c3po++}
\end{figure}

We introduce C3PO++, an extension of C3PO~\citep{team2025ring} that incorporates a budget-controlled rollout partition mechanism. This approach dynamically partitions rollout generation to prevent idleness of computational resources caused by individual long rollouts. The system incorporates two modules: a high-throughput inference pool $P_\text{infer} $ with capacity $\Omega_{\text{infer}}$ for parallel generation, and a training pool $Q_\text{train}$ with capacity $\Omega_\text{train}$ for collecting completed trajectories. Similar to the idea of C3PO, we regulate the rollout generation with a \textit{token budget} ($\Phi$), which stabilizes the training updates and enables a highly efficient rollout process.

The C3PO++ procedure is detailed in Algorithm \ref{algo:c3po++}. At iteration $t$, the inference engine $\pi_{\text{infer};\theta_t}$ populates the inference pool by generating rollouts in parallel, while tracking the cumulative number of generated tokens $C$ in real-time. When a rollout reaches a terminal state (i.e., $\texttt{[EOS]}$), it will be moved from $P_{\text{infer}}$ to the training pool $Q_{\text{train}}$ and counted towards the training tokens $C$. Inference proceeds until $C$ reaches the token budget $\Phi$. At this point, the training engine $\pi_{\text{train};\theta_t}$ updates the parameters with the completed trajectories in $Q_{\text{train}}$ regulated by the token budget, which may include samples resumed from earlier inference versions. We denote the number of partitions a sequence has undergone as the retention period. For each iteration, the retention period of unfinished rollouts will be automatically increased by 1. Before each iteration, rollouts whose retention period exceeds a threshold $\sigma$ are purged from $P_{\text{infer}}$. Meanwhile, new prompts may be sampled to refill $P_{\text{infer}}$ until it reaches capacity $\Omega_{\text{infer}}$. After the model parameters are updated to $\theta_{t+1}$, the inference engine $\pi_{\text{infer};\theta_{t+1}}$ initiates a new iteration of rollout generation, continuing the process for rollouts within the valid retention period and monitored by the token budget.

\begin{algorithm}[!tbh]
\caption{C3PO++}
\label{algo:c3po++}
\KwIn{
initial parameters $\theta_0$; inference engine $\pi_{\text{infer};\theta}$; training engine $\pi_{\text{train};\theta}$;
token budget $\Phi$;
inference pool capacity $\Omega_{\text{infer}}$;
retention threshold $\sigma$.
}
\KwOut{Sequence of parameter updates $\theta_0 \rightarrow \theta_1 \rightarrow \cdots$}
\textbf{State:} Inference pool $P_{\text{infer}}$ (capacity $\Omega_{\text{infer}}$); training pool $Q_{\text{train}}$.
\Begin{
  $P_{\text{infer}} \gets \emptyset$; $Q_{\text{train}} \gets \emptyset$; $t \gets 0$ \;
  \While{not converged}{
    $C \gets 0$\;
    \ForPar{$o \in P_{\text{infer}}$}{
        \If{$\text{retention}(o, \sigma)$}{
          $P_{\text{infer}} \gets P_{\text{infer}} \backslash \{o\}$  \tcp*{remove overextended rollouts from inference pool}
        }
      }
    \While{$(C < \Phi)$}{
      \If{$|P_{\text{infer}}| < \Omega_{\text{infer}}$}{
        $P_{\text{infer}} \gets P_{\text{infer}} \cup \text{sample\_prompt}()$ \tcp*{maintain a full inference pool}
      }
      \ForPar{$o \in P_{\text{infer}}$}{
        $o \gets \pi_{\text{infer};\theta_t}(o)$ \tcp*{generate the next token for rollouts in parallel}
        \If{$\text{terminal}(o)$}{
          $C \gets C + |o|$ \tcp*{cumulate token amount}
          $Q_{\text{train}} \gets Q_{\text{train}} \cup \{o\}$ \tcp*{save completed rollouts for training}
          $P_{\text{infer}} \gets P_{\text{infer}} \backslash \{o\}$  \tcp*{remove completed rollouts from inference}
        }
      }
    }
    $\theta_{t+1} \gets \text{Update}(\theta_t, Q_{\text{train}})$ \;
    $Q_{\text{train}} \gets \emptyset$ \;
    $t \gets t + 1$ \;
  }
  \Return{$\theta_t$}
}
\end{algorithm}

\subsubsection{Training Recipe}
\label{subsec:rl-train}
All policy optimization was conducted using the ASystem framework. We employ the AdamW optimizer with hyperparameters $\beta_1$ = 0.9, $\beta_2$ = 0.999, weight decay of $0.01$, and with the MoE router bias held fixed. 
\textbf{For the reasoning RL stage}, we implemented the proposed IcePop ($\alpha=0.5$, $\beta=5$) and C3PO++ algorithms. The training configuration used a learning rate of $2 \times 10^{-6}$, a KL coefficient of $0.0$, and a sampling temperature of $1.0$. Each training step utilized 480 unique prompts, with 8 rollouts sampled per prompt and a maximum length of 65,536 tokens. 
\textbf{For the general RL stage}, we utilized GRPO with a learning rate of $3 \times 10^{-6}$, a KL coefficient of $0.0$, and a sampling temperature of $1.0$. Each step in this stage consisted of 80 unique questions with 8 outputs each, and a maximum length of 32,768 tokens.
\subsection{Experiments and Analysis}
\label{subsec:rl-exp}
This section presents experiments to validate the effectiveness of our proposed methods: IcePop, which ensures stable policy optimization, and C3PO++, which enables efficient rollout generation.
\subsubsection{IcePop}\label{subsec:rl-exp-icepop}

\paragraph{Setup} To evaluate the effectiveness of IcePop, we conduct preliminary experiments on the Ring-mini-2.0\footnote{\href{https://huggingface.co/inclusionAI/Ring-mini-2.0}{https://huggingface.co/inclusionAI/Ring-mini-2.0}} model, which is a MoE model with 16.8B total parameters and 0.75B activated parameters. We compare three settings: (1) IcePop with $\alpha=0.5, \beta=5$, (2) TIS~\citep{yao2025offpolicy} with the officially recommended setting, which mitigates the training-inference mismatch issue with importance-sampling correction, and (3) Vanilla GRPO without the KL-term. For fair comparison, we use the same training dataset for all models.

\paragraph{Preliminary results on Ring-mini-2.0.} As shown in Figure \ref{fig:icepop_aime25}, we can see that IcePop consistently outperforms TIS on the challenging benchmark AIME25, with a large gain along the training process, and finally improves the base score (63\%) by over 14\%, and expands the performance gap with TIS by relative 6\%.

\begin{figure}[!htb]
    \centering
    \includegraphics[width=0.6\linewidth]{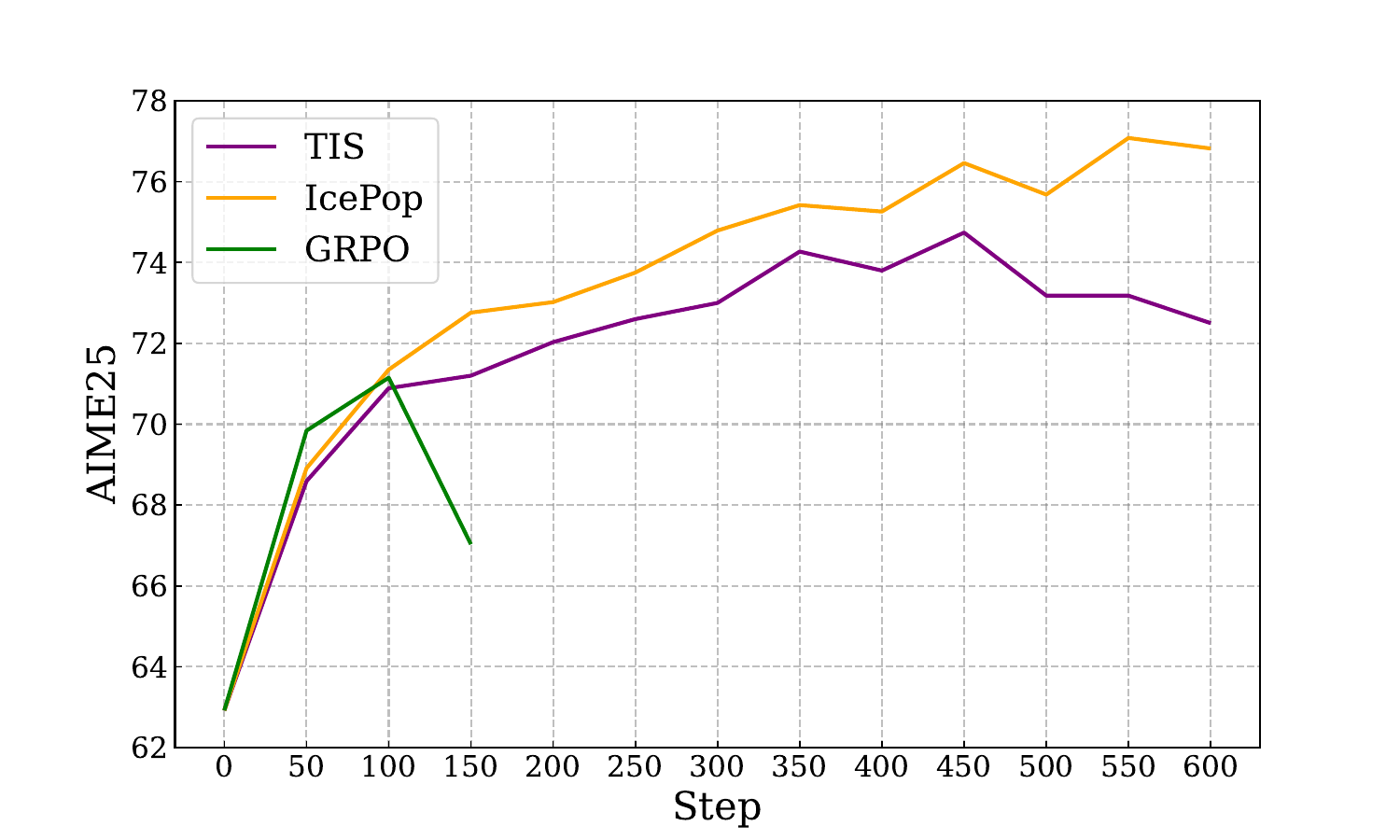}
    \caption{The performance comparison on AIME25 (Avg@64). We evaluate all models using the same setting.}
    \label{fig:icepop_aime25}
\end{figure}

\paragraph{Experiments on Ring-1T.} As training progresses, we can see from Figure~\ref{fig:icepop_ring_1t} that the original GRPO suffers from training instability, as both the gradient norms and the probability discrepancy between the inference and training engines tend to increase rapidly. However, after applying IcePop, we can observe that the mismatch issue has been largely mitigated, stabilizing the RL training process.

\begin{figure}[h!]
\centering
\begin{subfigure}[b]{0.45\textwidth}
\centering
\includegraphics[width=\textwidth]{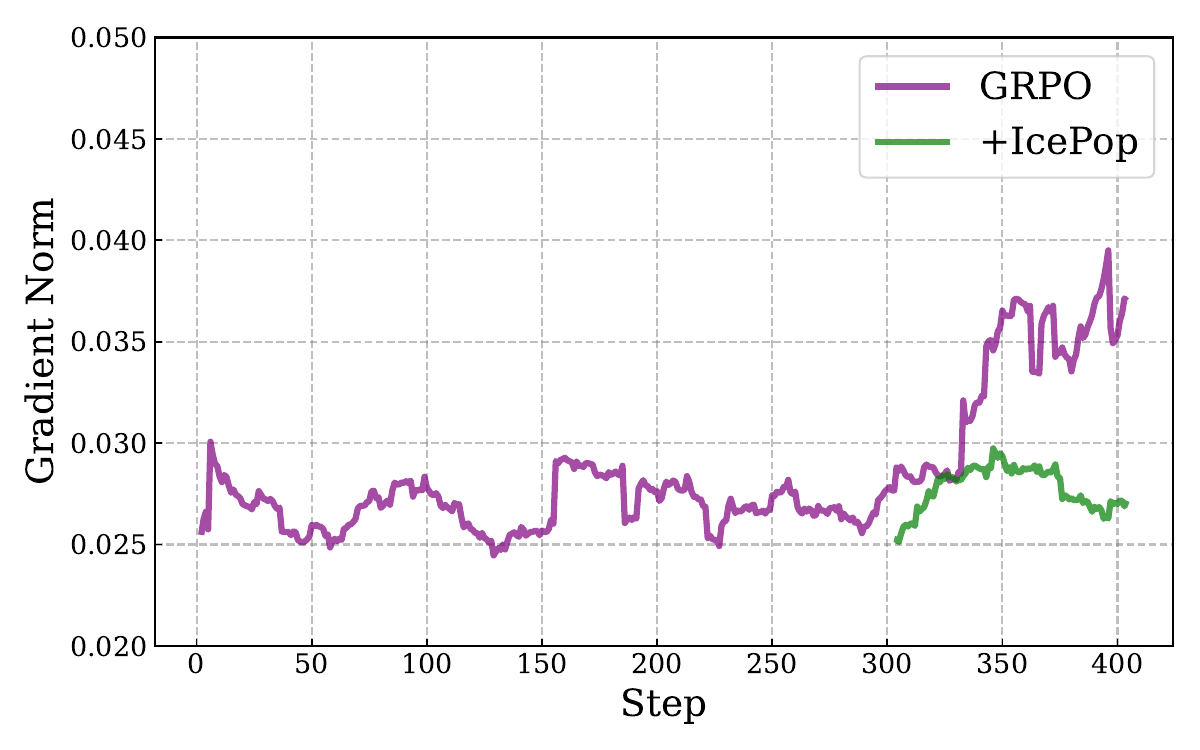}
\end{subfigure}
\hspace{2mm}
\begin{subfigure}[b]{0.45\textwidth}
\centering
\includegraphics[width=\textwidth]{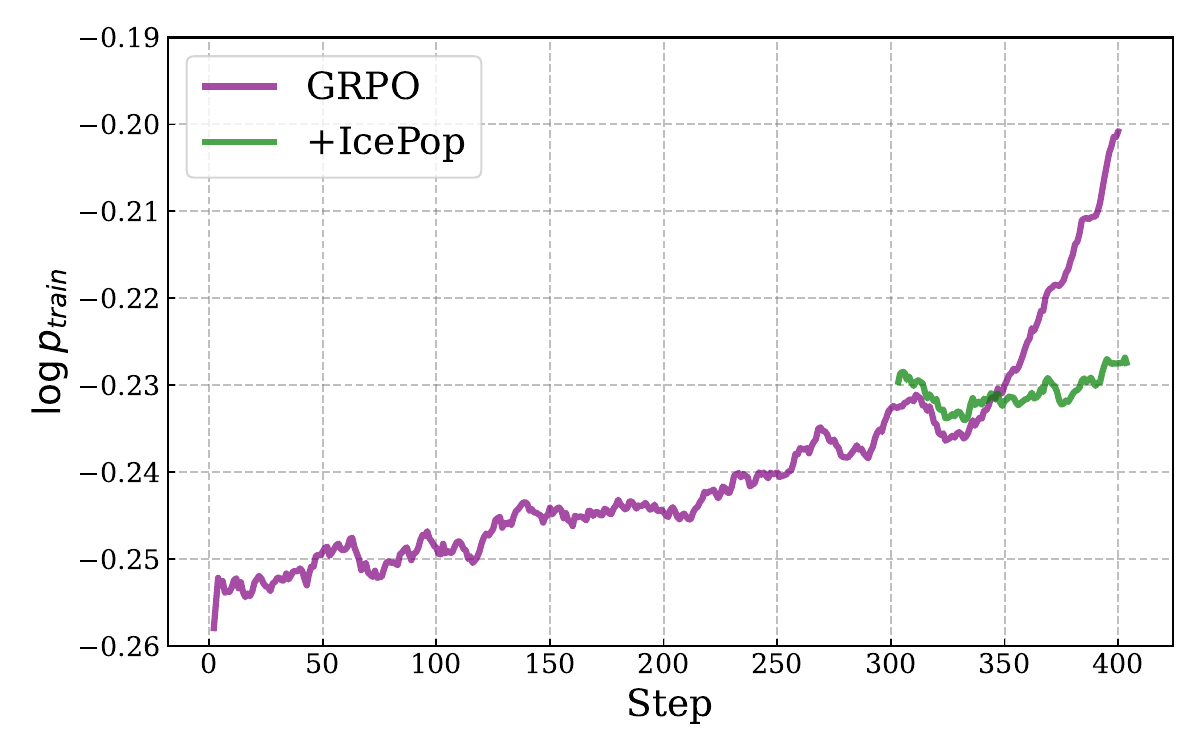}
\end{subfigure}
\begin{subfigure}[b]{0.45\textwidth}
\centering
\includegraphics[width=\textwidth]{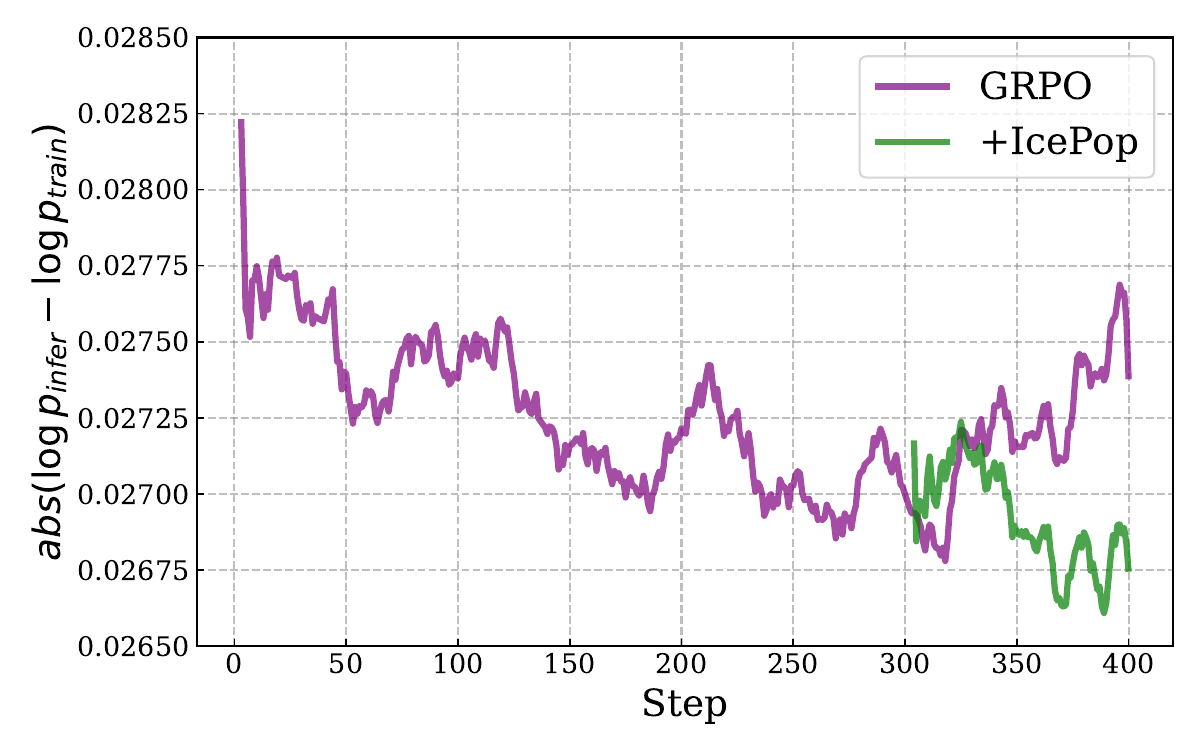}
\end{subfigure}
\hspace{2mm}
\begin{subfigure}[b]{0.45\textwidth}
\centering
\includegraphics[width=\textwidth]{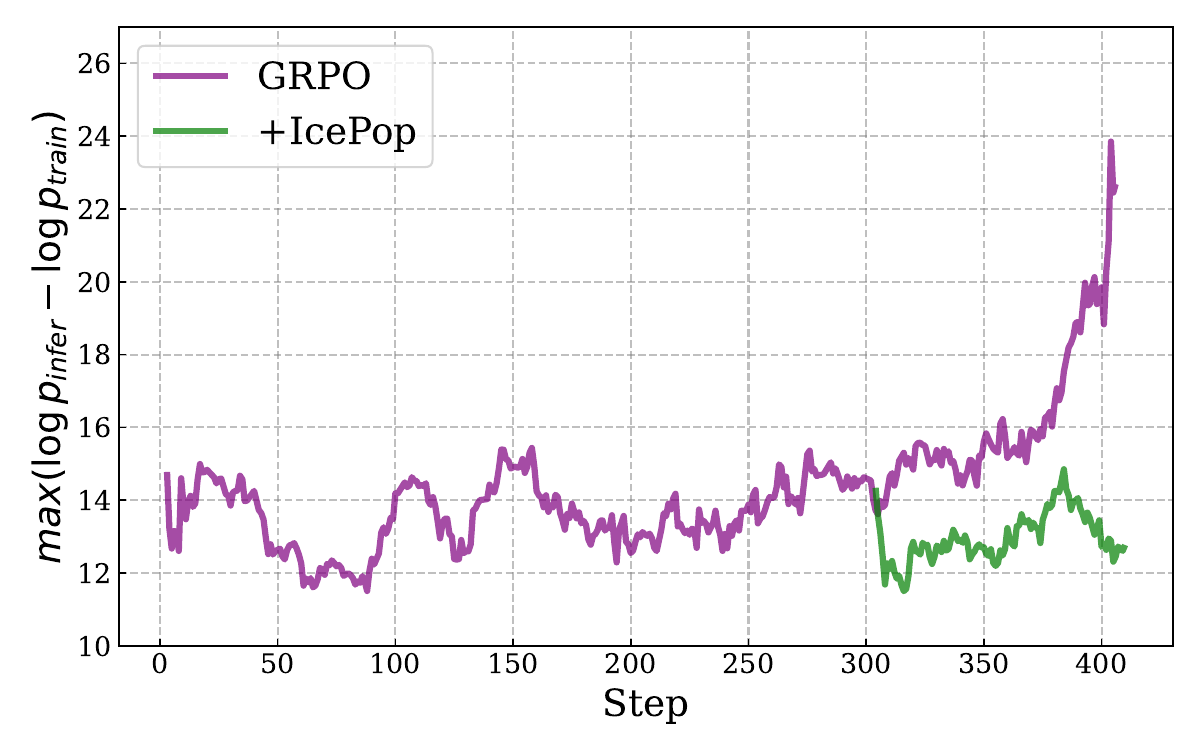}
\end{subfigure}
\caption{The training dynamics before and after applying IcePop.}
\label{fig:icepop_ring_1t}
\end{figure}

\subsubsection{C3PO++}\label{subsec:rl-exp-c3poplus}
We compare C3PO++ with the baseline setting that omits our budget-controlled rollout partition mechanism, assessing training efficiency and effectiveness in terms of training time, training reward, and benchmark performance. 
\begin{itemize}
    \item \textbf{Training Time.}~~As illustrated in Figure~\ref{fig:c3popp_time}, C3PO++ substantially reduces the time of the rollout phase, achieving an approximately 2.5 times speedup per step. Since rollout duration usually accounts for a large portion of training time in RL, the training optimization designed by C3PO++ yields about a 1.5 times speedup for the end-to-end phase per step, significantly boosting the training efficiency for reinforcement learning.
    \begin{figure}[!htb]
    \centering
    \begin{subfigure}[b]{0.45\textwidth}
    \centering
    \includegraphics[width=\textwidth]{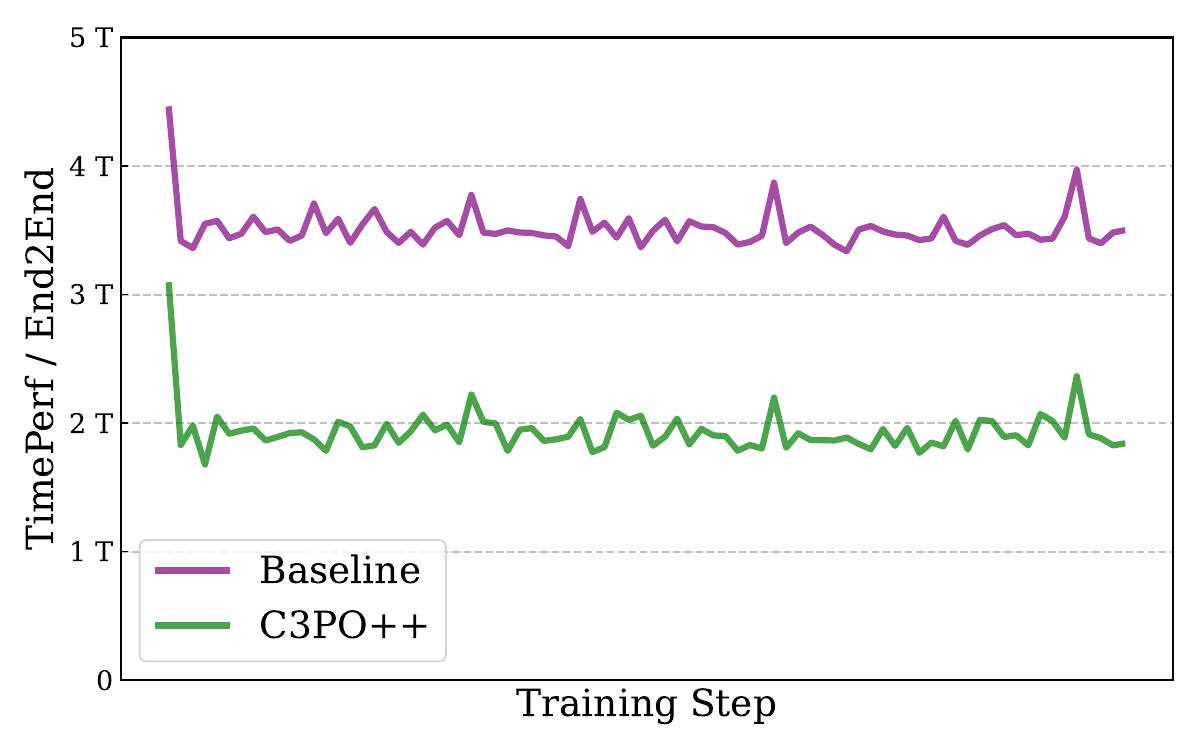}
    \end{subfigure}
    \hspace{2mm}
    \begin{subfigure}[b]{0.45\textwidth}
    \centering
    \includegraphics[width=\textwidth]{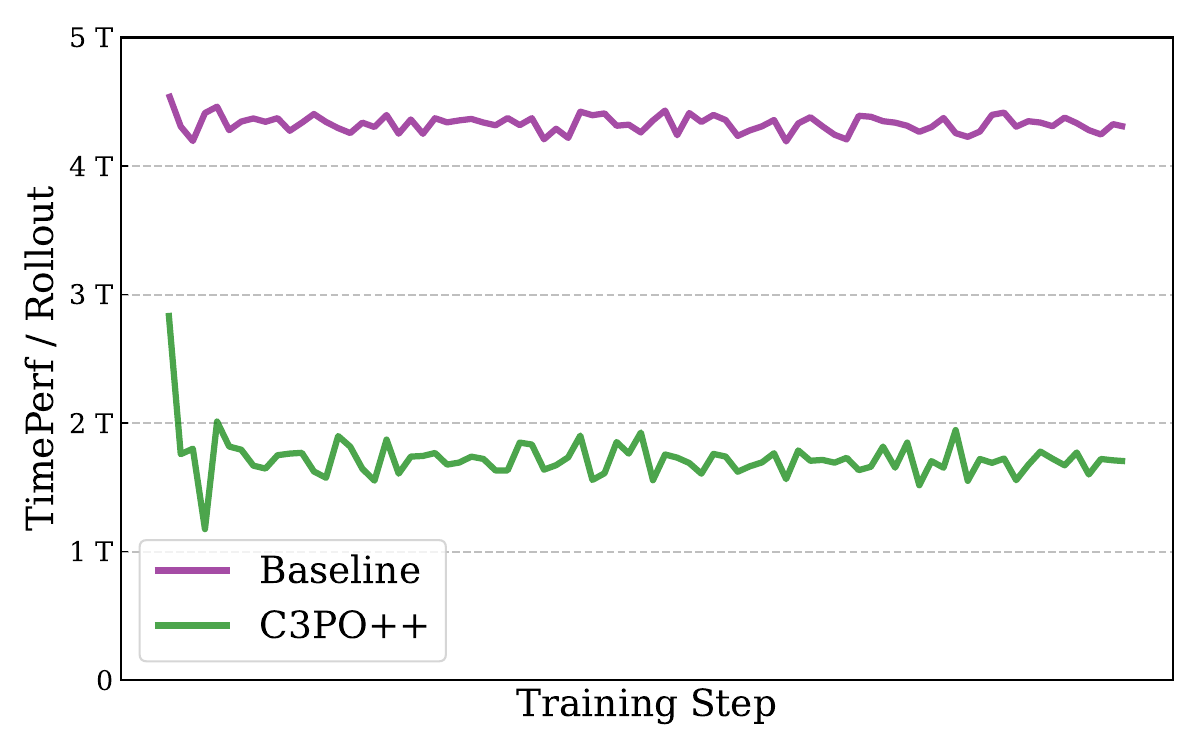}
    \end{subfigure}
    \caption{Comparison of time cost between C3PO++ and the baseline.}
   \label{fig:c3popp_time}
   \end{figure}
   
    \item \textbf{Reward and Performance.}~~As shown in Figure~\ref{fig:c3popp_reward}, the reward curve of C3PO++ remains close to that of the baseline, suggesting that our optimization in rollout management maintains comparable training dynamics in the reinforcement learning process. On the representative reasoning benchmarks, C3PO++ achieves performance on par with the baseline, demonstrating its strength in producing competitive results.
    \begin{figure}[!htb]
    \centering
    \begin{subfigure}[b]{0.45\textwidth}
    \centering
    \includegraphics[width=\textwidth]{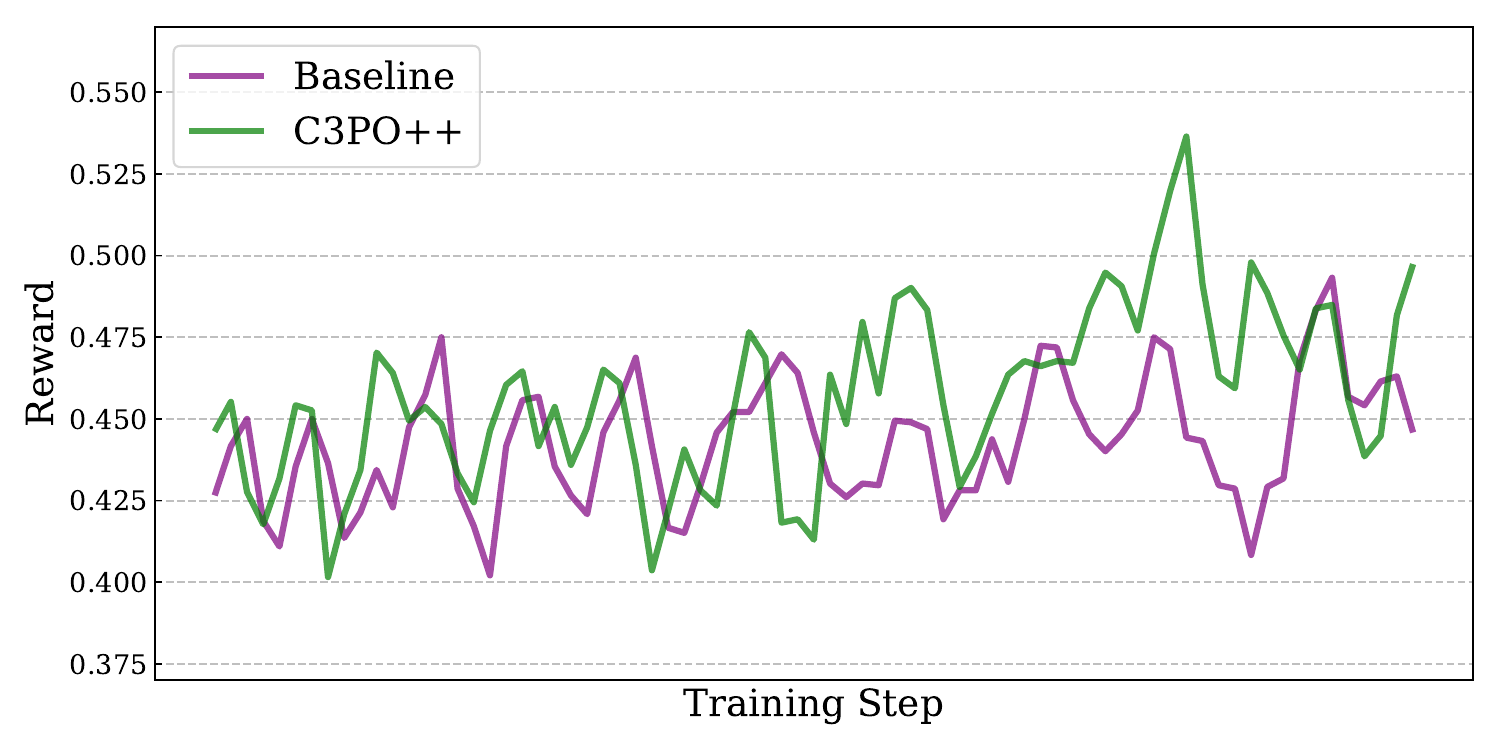}
    \end{subfigure}
    \hspace{2mm}
    \begin{subfigure}[b]{0.45\textwidth}
    \centering
    \includegraphics[width=\textwidth]{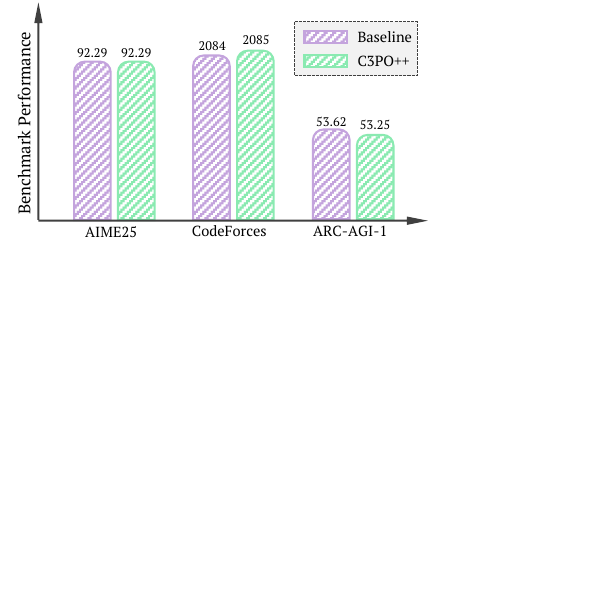}
    \end{subfigure}
    \caption{Comparison of reward and benchmark performance between C3PO++ and the baseline.}
   \label{fig:c3popp_reward}
   \end{figure}

\end{itemize}

\subsection{Large-Scale RL Infrastructure: ASystem}
\label{subsec:asys}

\begin{figure}[!hbt]
    \centering
    \includegraphics[width=0.85\linewidth]{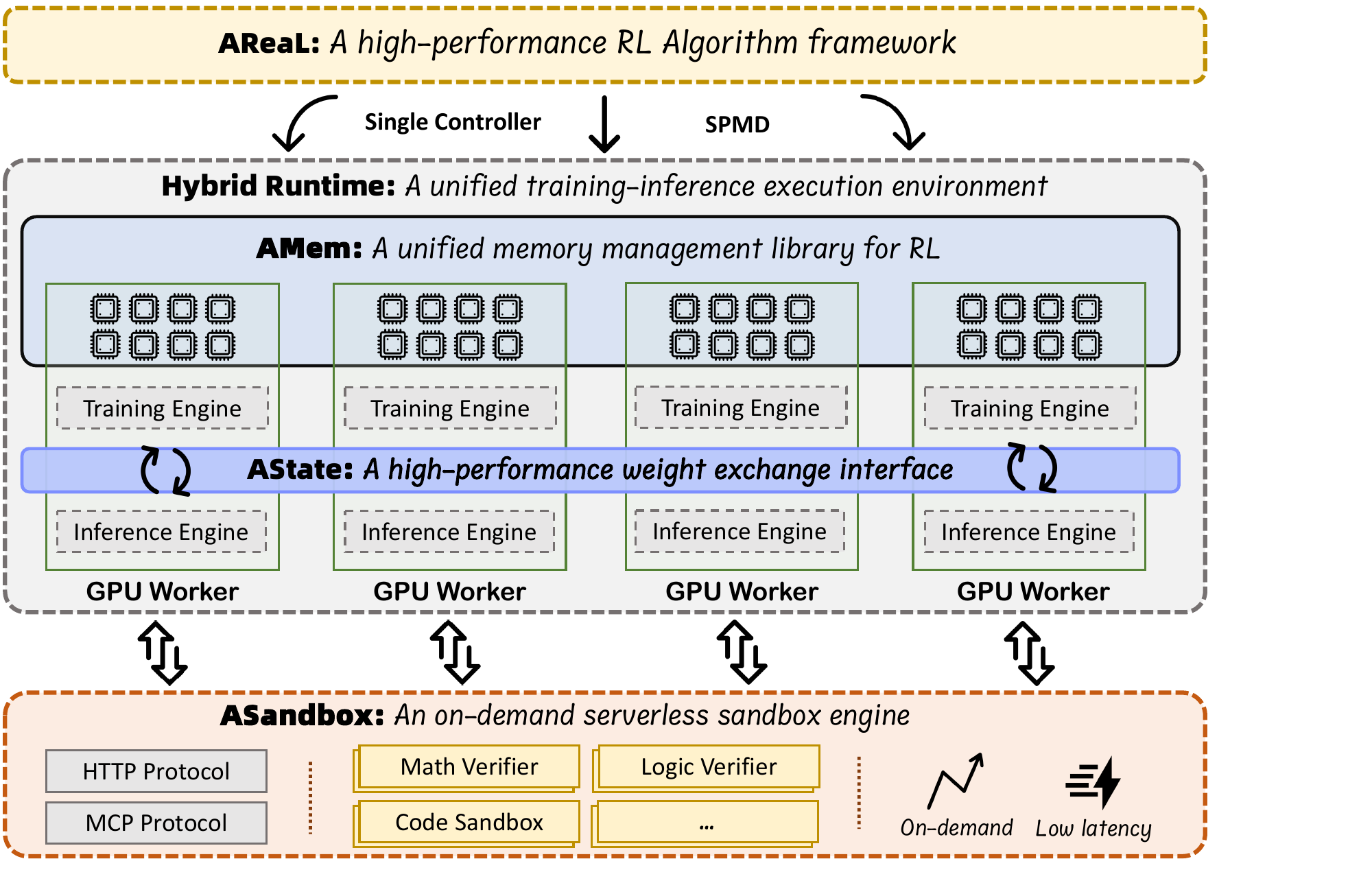}
    \caption{An overview of ASystem RL training framework.}
    \label{fig:asystem}
\end{figure}

Training \model{} with reinforcement learning requires a specialized infrastructure that can manage its unprecedented scale. The sheer size of the model, coupled with the inherent complexity of distributed RL workflows, poses unique challenges in memory management, state synchronization, and computational throughput. To this end, we developed ASystem, a high-performance RL framework whose components are co-designed with the requirements of \model{} in mind.

As illustrated in Figure~\ref{fig:asystem}, ASystem's architecture is built around a unified execution environment and includes the following key components, each engineered to address a specific bottleneck in the RL training for a trillion-parameter model:

\begin{itemize}
\item \textbf{Hybrid Runtime}: The core of ASystem, this runtime seamlessly integrates training and inference workloads. For \model{}, this means we can conduct massive parallel policy evaluation (inference) and model weight updates (training), eliminating the overhead of data transfer between separate systems and ensuring efficient utilization of thousands of GPUs.
\item \textbf{AMem}: AMem is a GPU memory management library designed to overcome the critical memory bottleneck in large-scale RL training, like that of our 1T model. It optimizes memory usage and data transfer, enabling larger batches, fewer OOM errors, and faster deployment with minimal code changes and no loss of accuracy.
\item \textbf{AState}: AState is a high-performance weight synchronization framework for RL. It efficiently addresses the challenge of distributing updated model parameters from trainers to inference actors using a zero-redundancy peer-to-peer mechanism, enabling synchronization of trillion-parameter models in under 10 seconds.
\item \textbf{ASandbox}:  A serverless environment for rapid scenario validation. By offering millisecond-scale cold start and high-throughput isolation, ASandbox accelerates evaluation of \model{} rollouts during large-scale RL training.
\end{itemize}

This foundational design, based on a SingleController + SPMD (Single Program, Multiple Data) architecture, delivers significant advantages for robust large-scale training. It provides plug-and-play support for training, inference, and reward model backends, facilitating independent debugging and development at scale. Crucially, by separating the control flow from the data flow, ASystem effectively mitigates the single-point data flow bottlenecks prevalent in mainstream SingleController frameworks. Furthermore, the system incorporates mechanisms for fast-fail reporting and automatic recovery from slow training and hangs, thereby enhancing overall training stability and efficiency for demanding workloads like our \model{} model.

\subsubsection{Hybrid Runtime: A Unified Training-Inference Execution Environment}

Hybrid Runtime is an integrated training-inference system designed for large-scale LLM reinforcement learning. It provides a high-performance, elastic, and scalable foundation by unifying efficient resource scheduling, linear scalability, comprehensive parallelism strategies, and a unified execution engine. The system is architected to support models of diverse architectures, scales, and training paradigms on large-scale clusters.

To bridge the gap between dynamic training and real-time inference in reinforcement learning (RL), we introduce \textbf{AState}, a high-speed framework for synchronizing weights between training and inference. AState provides a unified weight management API that supports diverse model architectures, deployment topologies, and pipeline paradigms without requiring framework modifications. At its core, a zero-redundancy peer-to-peer transmission mechanism delivers only necessary weight shards, enabling in-place updates on inference engines to eliminate costly data copies. This is complemented by a hardware–software co-design that optimizes data movement through NUMA topology and CPU-GPU affinity awareness, alongside a multi-transport communication layer (integrating RDMA, NCCL, and shared memory) that dynamically selects the optimal protocol based on data size and hardware topology. Consequently, AState achieves sub-second parameter updates, ensuring inference rollouts use the latest model and maintaining the critical training-inference alignment essential for stable policy optimization.

To enhance GPU memory efficiency, we introduce \textbf{AMem}, a memory and data transfer library optimized for RL workloads on GPU clusters. AMem enhances memory management efficiency through three key mechanisms: (1) Memory Switching, for the transparent release and resumption of training state, including NCCL communications and CUDA graphs; (2) Distributed Multi-path Transfer~\cite{shen2025flexlinkboostingnvlinkbandwidth}, which aggregates bandwidth across multiple channels; and (3) Unified Memory Pooling, for dynamic allocation across GPUs and nodes. By enabling larger batch sizes, reducing out-of-memory (OOM) errors, and accelerating system startup, AMem alleviates common bottlenecks in large-scale RL. The library is designed for transparency, requiring no model modifications and ensuring no impact on RL convergence, thereby providing robust infrastructure support for the Hybrid Runtime and AState components.

\subsubsection{ASandbox: An On-Demand Serverless Sandbox Engine}
ASandbox is a serverless sandbox engine for RL, providing rapid, isolated environments for tasks like code execution and terminal simulation. Integrated with Kubernetes and deployable as a standalone FaaS cluster, it executes RL tasks via function calls. It offers specialized sandboxes (e.g., math, code, STEM, terminal) supporting HTTP and MCP protocols. To ensure the consistent, stable feedback critical for RL training, it features: 1) Security: Kernel-level isolation via secure containers (runsc, kata); 2) Availability: Automatic node failure detection and isolation; 3) Speed: 100ms startup via image caching, cgroups, and fork; 4) Scalability: 5,000 QPS/200ms throughput via scheduling partitions.

\subsubsection{AReaL: A High-Performance RL Algorithm Framework}
ASystem is a unified, high-performance foundation for distributed reinforcement learning. Its reinforcement learning component, AReaL, is an open-source framework~\citep{fu2025areal} built to prioritize algorithm development by balancing ease of use with system flexibility. It offers both single-controller and SPMD interfaces through minimalist APIs and an extensible plugin mechanism, allowing researchers to focus on algorithmic innovation.

AReaL is characterized by several key features as follows:

\begin{itemize}
    \item \textbf{Asynchronous Multi-Stage Pipeline:} A fully decoupled architecture that concurrently executes trajectory generation, reward computation, and training. This overlap eliminates rollout long-tail issues and maximizes hardware utilization.
    \item \textbf{Efficient Data Management:} Intelligent data packing and sharding minimize padding and rebalancing overhead, reducing computational waste and training stalls.
    \item \textbf{Fault Tolerance:} The system features automated error detection, retry, and recovery mechanisms to ensure stability amidst hardware and software failures.
    \item \textbf{Massive Scalability:} By separating control and data planes, AReaL avoids the single-controller bottleneck, enabling seamless scaling across large clusters.
\end{itemize}

\section{Evaluation}
\label{sec:eval}
This section presents the performance of our \model{} on a suite of challenging benchmarks spanning mathematics, coding, and logical reasoning, as well as other general tasks, comparing it against leading reasoning models.

\subsection{Benchmarks}
\label{subsec:bench}

To comprehensively assess \model{}, we conduct evaluations across a wide range of benchmarks, primarily covering 8 domains: knowledge, coding, math, reasoning, alignment, healthcare, multi-turn, and agent.

\begin{itemize}
    \item \textbf{Knowledge:} GPQA-Diamond~\citep{gpqa}, MMLU-Pro~\citep{mmlu-pro}, C-Eval~\citep{ceval}, Phybench~\citep{qiu2025phybenchholisticevaluationphysical}, AGIEval~\citep{zhong2023agieval}, TriviaQA~\citep{2017arXivtriviaqa}, CMMLU~\citep{li2023cmmlu}.
    \item \textbf{Coding:} LiveCodeBench-v6 (2408 to 2505)~\citep{livecodebench}, CodeForces~\footnote{The Codeforces was assessed through problems from 14 Div. 2 contests of Codeforces, combined with expert-designed test cases, followed by the computation of expected ratings and competitor proportions. It is worth noting that the highest rating attainable is 2209.}, Aider~\footnote{https://aider.chat/docs/benchmarks.html\#the-benchmark}.
    \item \textbf{Math:} AIME 2025~\citep{aime}, Omni-MATH~\citep{gao2024omnimathuniversalolympiadlevel}, HMMT 2025, CNMO 2024, FinanceReasoning~\citep{tang2025financereasoning}, UGMathBench~\citep{xu2025ugmathbench}.
    \item \textbf{Reasoning:} ARC-AGI-1~\citep{chollet2024arc}, BBEH~\citep{kazemi2025big}, ZebraLogic~\citep{lin2025zebralogic}, HLE~\citep{phan2025humanity}.
    \item \textbf{Alignment:} ArenaHard v2~\citep{li2024live}, Creative Writing v3~\citep{creative-writing-bench-v3}, IFEval~\citep{zhou2023instruction}.
    \item \textbf{Healthcare:} HealthBench~\citep{HealthBench}.
    \item \textbf{Multi-turn:} MultiChallenge~\citep{sirdeshmukh2025multichallenge}.
    \item \textbf{Agent:} BFCL v3~\citep{BFCL}.
\end{itemize}

We benchmark \model{} against leading open-weights models (DeepSeek-V3.1-Terminus-Thinking, and Qwen-35B-A22B-Thinking-2507) and proprietary API models (Gemini-2.5-pro, GPT-5-Thinking). All evaluations use controlled experimental conditions with standardized configurations.

\definecolor{TABLE_LINE}{gray}{0.95} 
\definecolor{ringblue}{RGB}{65, 155, 255}

\begin{table}[htbp]
\centering
\caption{Performance comparison across multiple benchmarks.}
\label{tab:main_results}
\resizebox{\linewidth}{!}{
\begin{tabular}{lccccc}
\toprule
\multicolumn{1}{l|}{\multirow{3}{*}{\textbf{Benchmark}}} & \multicolumn{3}{c}{\textbf{Open Weights}} & \multicolumn{2}{c}{\textbf{Close Weights}} \\
\cmidrule(lr){2-4} \cmidrule(lr){5-6}
\multicolumn{1}{l|}{} & 
\multirow{2}{*}{\centering\textcolor[HTML]{0369ff}{\textbf{\model{}}}} &  
\textbf{DeepSeek-V3.1-} & 
\multicolumn{1}{c}{\textbf{Qwen3-235B-A22-}} & 
\textbf{Gemini-} & 
\textbf{GPT-5-} \\
\multicolumn{1}{l|}{} & 
& 
\textbf{Terminus-Thinking} & 
\multicolumn{1}{c}{\textbf{Thinking-2507}} & 
\textbf{2.5-Pro} & 
\textbf{Thinking (High)} \\
\midrule
\multicolumn{1}{l|}{Architecture}  & MoE & MoE & MoE & - & - \\
\multicolumn{1}{l|}{\# Total Params}  & 1T & 671B & 235B & - & - \\
\multicolumn{1}{l|}{\# Activated Params}  & 50B & 37B & 22B & - & - \\ \midrule

\rowcolor{TABLE_LINE} \multicolumn{6}{c}{\emph{Math}} \\
\multicolumn{1}{l|}{AIME 2025 {\footnotesize (Avg@64)}} & \textcolor{ringblue}{\second{93.40}} & 89.06 & 92.30 & 88.00 & \best{94.60} \\
\multicolumn{1}{l|}{Omni-MATH} & \textcolor{ringblue}{\second{82.63}} & 81.93 & 82.52 & 82.14 & \best{82.90} \\
\multicolumn{1}{l|}{HMMT25 {\footnotesize (Avg@16)}} & \textcolor{ringblue}{\second{86.72}} & 86.10 & 83.90 & 82.50 & \best{93.30} \\
\multicolumn{1}{l|}{CNMO 2024} & \second{88.54} & 85.42 & \best{89.50} & 80.64 & 87.93 \\
\multicolumn{1}{l|}{FinanceReasoning} & 87.42 & \second{87.76} & 87.65 & 87.33 & \best{89.33} \\
\multicolumn{1}{l|}{UGMathBench} & 76.47 & \second{77.19} & 77.00 & 74.50 & \best{80.18} \\

\rowcolor{TABLE_LINE} \multicolumn{6}{c}{\emph{Coding}} \\
\multicolumn{1}{l|}{LCB-v6 (2408-2505) {\footnotesize (Avg@4)}} & \textcolor{ringblue}{\second{78.30}} & 75.33 & 75.72 & 70.65 & \best{80.60} \\
\multicolumn{1}{l|}{CodeForces {\footnotesize (rating)}} & \textcolor{ringblue}{\best{2088}} & \second{2073} & 2055 & 1837 & 1918 \\
\multicolumn{1}{l|}{CodeForces {\footnotesize (percentile)}} & \textcolor{ringblue}{\best{97.85}} & \second{97.76} & 97.55 & 93.47 & 86.18 \\
\multicolumn{1}{l|}{Aider} & 78.57 & 92.86 & 88.91 & \second{94.36} & \best{95.49} \\

\rowcolor{TABLE_LINE} \multicolumn{6}{c}{\emph{Reasoning}} \\
\multicolumn{1}{l|}{ARC-AGI-1} & \textcolor{ringblue}{\second{55.94}} & 40.62 & 48.12 & 45.44 & \best{65.70} \\  
\multicolumn{1}{l|}{BBEH} & 59.63 & \second{61.04} & 60.00 & 51.51 & \best{72.78} \\
\multicolumn{1}{l|}{ZebraLogic} & 95.15 & 96.33 & \second{97.03} & 92.40\textsuperscript{†} & \best{98.00} \\
\multicolumn{1}{l|}{HLE} & 16.03 & 17.82 & 13.95 & \second{21.60\textsuperscript{†}} & \best{24.84} \\

\rowcolor{TABLE_LINE} \multicolumn{6}{c}{\emph{Knowledge}} \\
\multicolumn{1}{l|}{GPQA-Diamond} & 78.63 & 81.00\textsuperscript{†} & 81.10\textsuperscript{†} & \best{86.40\textsuperscript{†}} & \second{86.05}  \\
\multicolumn{1}{l|}{MMLU-Pro}  & 80.54  & 85.00 \textsuperscript{†} & 84.40\textsuperscript{†} & \second{85.62} & \best{86.21}      \\ 
\multicolumn{1}{l|}{C-Eval} & \second{91.53} & 91.22 & \best{93.13} & 90.14 & 88.27 \\ 
\multicolumn{1}{l|}{Phybench} & 42.65 & 47.91 & 42.61 & \best{55.01} & \second{48.53} \\
\multicolumn{1}{l|}{AGIEval} & 88.13 & \second{89.83} & \best{90.01} & 88.99 & 88.56 \\
\multicolumn{1}{l|}{TriviaQA} & 78.59 & 82.77 & 79.63 & \second{84.45} & \best{86.32} \\
\multicolumn{1}{l|}{CMMLU} & 89.14 & \second{89.20} & \best{90.64} & 88.83 & 87.09 \\

\rowcolor{TABLE_LINE} \multicolumn{6}{c}{\emph{Alignment}} \\
\multicolumn{1}{l|}{ArenaHard v2 {\footnotesize (win-rate)}} & \textcolor{ringblue}{\second{81.59}} & 60.27 & 80.18 & 79.20 & \best{82.91} \\  
\multicolumn{1}{l|}{ArenaHard v2 {\footnotesize (Elo)}} & \textcolor{ringblue}{\best{84.52}} & 62.73 & 81.72 & 80.92 & \second{83.18} \\
\multicolumn{1}{l|}{Creative Writing v3} & 85.40 & 85.24 & 85.49 & \second{85.70} & \best{89.69} \\
\multicolumn{1}{l|}{IFEval} & 85.21 & \second{89.09} & 87.80 \textsuperscript{†} & 88.72 & \best{95.38} \\

\rowcolor{TABLE_LINE} \multicolumn{6}{c}{\emph{Healthcare}} \\
\multicolumn{1}{l|}{HealthBench} & \textcolor{ringblue}{\second{57.93}} & 50.19 & 55.56 & 49.39 & \best{67.20} \\

\rowcolor{TABLE_LINE} \multicolumn{6}{c}{\emph{Multi-turn}} \\
\multicolumn{1}{l|}{MultiChallenge} & 50.92 & 45.79 & 52.75 & \second{54.58} & \best{69.60\textsuperscript{†}} \\

\rowcolor{TABLE_LINE} \multicolumn{6}{c}{\emph{Agent}} \\
\multicolumn{1}{l|}{BFCL v3} & \second{68.82} & 62.01 & \best{73.53} & 61.36 & 57.21 \\

\midrule
\multicolumn{6}{@{}l@{}}{\small\textsuperscript{†} Results reported in official model documentation.} \\
\multicolumn{6}{@{}l@{}}{\small\textcolor{ringblue}{Blue} denotes \model{} achieving state-of-the-art among open-source models.} \\
\multicolumn{6}{@{}l@{}}{\small\textbf{Bold} indicates first place overall performance.} \\
\multicolumn{6}{@{}l@{}}{\small\underline{Underline} indicates second place overall performance.} \\

\bottomrule
\end{tabular}}

\end{table}

\subsection{Evaluation Settings}
All thinking models are evaluated under a standardized pipeline for a fair comparison. Benchmarks including AIME 2025, LiveCodeBench-v6, ARC-AGI-1, CodeForces, HMMT 2025, ZebraLogic, HLE, and BFCL v3 are assessed with a 128K context window, extended via YaRN \citep{peng2023yarn} for models with insufficient native context. Benchmarks including Aider, CNMO 2024, and BBEH use a 64K context window. Other benchmarks use a 32K context window. For baseline models, we use their official hyperparameters for open-weights models, while using vendor-recommended settings for proprietary APIs. For baselines with officially reported results, we report the higher score between our reproduction and the official release. 

\label{subsec:eval-set}

\subsection{Results}
\label{subsec:eval-res}

Table~\ref{tab:main_results} provides a comprehensive comparison of \model{} against leading thinking models. The following sections provide a detailed analysis of its performance across different aspects:

\paragraph{Mathematical Reasoning} 
\model{} demonstrates leading mathematical reasoning capabilities, as evidenced by its performance on challenging benchmarks. By relying solely on natural language reasoning, it achieves 93.40\% on AIME 25 and 86.72\% on HMMT 25, securing the second-highest rank overall and leading all open-weights models. Furthermore, the model delivers competitive results across specialized mathematical domains, scoring 82.63\% on Omni-MATH and 88.54\% on CNMO 2024. These results highlight a particular proficiency in complex, Olympiad-style problem-solving. This demonstrates that a stable and efficient RL training recipe, together with a diverse and high-quality math training dataset, drives superior mathematical reasoning across competition-level benchmarks. 

Moreover, we evaluate the mathematical reasoning capabilities of \model{} on the IMO 2025. Specifically, \model{} is integrated into the multi-agent framework AWorld~\citep{yu2025aworld} and tasked with solving the problems through pure natural language reasoning, without relying on code generation or external symbolic solvers. The model successfully solved Problems 1, 3, 4, and 5 on its first attempt, a performance corresponding to the IMO silver medal level. On its third attempt, it generated a nearly complete geometric proof for Problem 2. For the most challenging Problem 6, which no AI participant solved correctly during IMO 2025, \model{} converged to the same incorrect answer (4048) as Gemini 2.5 Pro, whereas the correct answer is 2112. A detailed case study is available in Appendix~\ref{sec:imo}. We believe the outstanding natural language reasoning ability will generalize to a broader range of tasks, paving the way for enhanced overall performance.

\paragraph{Coding Capabilities} 
As the results show, \model{} demonstrates exceptional performance in programming tasks that demand iterative refinement and deep logical reasoning, establishing a leading position among both open-weights and closed-weights models. On LiveCodeBench-v6 (2408-2505), it achieves a top score of 78.30\%, outperforming DeepSeek-V3.1 by 2.97 points and Qwen3-235B-A22B-Thinking-2507 by 2.58 percentage points. Furthermore, on CodeForces (rating), \model{} attains a score of 2088, which is the highest score among all models and exceeds the performance of both open-source competitors and closed-source APIs. It indicates that our carefully synthesized dataset shapes \model{}'s robust performance on programming applications, which forms a strong foundation for future endeavors on agentic applications. 

\paragraph{Logical Reasoning} 
\model{} demonstrates promising capabilities in other logical reasoning tasks. Powered by sourcing carefully selected logical games from multiple domains, \model achieves a score of 55.94\% on the challenging ARC-AGI-1 benchmark, ranking second overall. This performance places it only behind GPT-5-Thinking (65.70\%) and represents a substantial improvement of +15.32 percentage points over DeepSeek-V3.1 (40.62\%) and +7.82 points over Qwen3-235B-A22B-Thinking-2507 (48.12\%).

\paragraph{Human Alignment} In addition to the reasoning RL training, we also leverage a general RL training stage to equip the reasoning model with strong performance on general tasks. From Table~\ref{tab:main_results}, \model{} achieves strong alignment with human preferences in complex scenarios. On the ArenaHard v2 benchmark, it attains an 81.59\% win-rate, ranking second overall and trailing GPT-5-Thinking by only 1.32 percentage points. It also leads all models with an Elo rating of 84.52. In Creative Writing v3, \model{} scores 85.40\%, performing within 0.1 percentage points of the leading open-source model. These results confirm \model{}'s effectiveness in balancing human preference alignment with broad capabilities—a critical advantage for real-world deployment.

\paragraph{Healthcare Capabilities} 
On HealthBench, \model{} attains a score of 57.93\%, ranking second overall and leading the field of open-source models. This performance indicates proficient clinical knowledge integration and suggests the model's viability for complex healthcare tasks.

\section{Conclusion}
\label{sec:conclu}

In this work, we have presented \model{}, a landmark achievement as the first open-weights 1 \textbf{Trillion} parameter thinking model. This project successfully addressed the profound and unprecedented system and algorithmic challenges inherent in scaling reinforcement learning to a trillion-parameter regime. The core of our contribution lies in three interconnected innovations: the IcePop for resolving training-inference mismatches, the C3PO++ for efficient long-trajectory rollouts, and the ASystem framework that eliminates scalability bottlenecks and ensures training stability. Together, these advancements enabled the stable and efficient training of \model{}, which has demonstrated breakthrough, state-of-the-art performance across a rigorous set of benchmarks spanning mathematical reasoning, competitive programming, and general intelligence. These results validate our approach and systems, demonstrating that trillion-parameter reasoning models are not only feasible but also exhibit exceptional capability.

\section{Limitations \& Future Work}
Despite its landmark achievements, \model{} and its associated training systems have several limitations that point to fruitful directions for future research.

\begin{itemize}
    \item \textbf{Model Architecture \& Inference Efficiency:} The model's use of GQA~\citep{ainslie2023gqatraininggeneralizedmultiquery} provides a solid balance between performance and speed. However, for our \model{} thinking model, which generates extensive internal ``thought'' processes, the inference cost imposed by GQA remains non-trivial. Future work will therefore explore alternative mechanisms, such as MoBA~\citep{lu2025mobamixtureblockattention} or advanced linear attention variants, to achieve the higher throughput required for efficient inference. 

    \item \textbf{Training-Inference Consistency:} While our IcePop methodology mitigates the major training-inference mismatch, it does not achieve perfect training-inference consistency. Underlying numerical discrepancies between the training and inference computational operators persist as a latent source of instability. Resolving this fundamental systems challenge is imperative for the stable scaling of future models.
    
    \item \textbf{Capability Deficiencies:} The training strategy for \model{} was optimized for foundational natural language reasoning, leaving advanced agentic skills (e.g., tool use) under-optimized. Future iterations will position \model{} as a base for such capabilities, integrating specialized data and training paradigms like agentic RL to cultivate sophisticated autonomous problem-solving. Additionally, minor issues such as identity confusion and linguistic code-switching, attributed to data impurity and insufficient regularization, will be addressed through refined data curation techniques.
    
\end{itemize}

\clearpage
\section{Contributors}
\label{sec:contri}

\DTLnewdb{names}
\DTLnewrow{names} \DTLnewdbentry{names}{name}{Zihao Wang}
\DTLnewrow{names} \DTLnewdbentry{names}{name}{Kuan Xu}
\DTLnewrow{names} \DTLnewdbentry{names}{name}{Jia Guo}
\DTLnewrow{names} \DTLnewdbentry{names}{name}{Xin Zhao}
\DTLnewrow{names} \DTLnewdbentry{names}{name}{Yan Sun}
\DTLnewrow{names} \DTLnewdbentry{names}{name}{Liang Jiang}
\DTLnewrow{names} \DTLnewdbentry{names}{name}{Zhenyu Huang}
\DTLnewrow{names} \DTLnewdbentry{names}{name}{Shuaicheng Li}
\DTLnewrow{names} \DTLnewdbentry{names}{name}{Yongkang Liu}
\DTLnewrow{names} \DTLnewdbentry{names}{name}{Xiaopei Wan}
\DTLnewrow{names} \DTLnewdbentry{names}{name}{Yuchen Yan}
\DTLnewrow{names} \DTLnewdbentry{names}{name}{Xueyu Hu}
\DTLnewrow{names} \DTLnewdbentry{names}{name}{Xinyu Kong}
\DTLnewrow{names} \DTLnewdbentry{names}{name}{Zhenduo Zhang}
\DTLnewrow{names} \DTLnewdbentry{names}{name}{Qianggang Cao}
\DTLnewrow{names} \DTLnewdbentry{names}{name}{Zhixun Li}
\DTLnewrow{names} \DTLnewdbentry{names}{name}{Xinyu Tang}
\DTLnewrow{names} \DTLnewdbentry{names}{name}{Zujie Wen$^{\dagger}$}
\DTLnewrow{names} \DTLnewdbentry{names}{name}{Tongkai Yang}
\DTLnewrow{names} \DTLnewdbentry{names}{name}{Hao Dai}
\DTLnewrow{names} \DTLnewdbentry{names}{name}{Zhiqiang Ding}
\DTLnewrow{names} \DTLnewdbentry{names}{name}{Jun Mei}
\DTLnewrow{names} \DTLnewdbentry{names}{name}{Zhenxuan Pan}
\DTLnewrow{names} \DTLnewdbentry{names}{name}{Junping Zhao}
\DTLnewrow{names} \DTLnewdbentry{names}{name}{Jian Liu}
\DTLnewrow{names} \DTLnewdbentry{names}{name}{Shuwei Gu}
\DTLnewrow{names} \DTLnewdbentry{names}{name}{Xudong Han}
\DTLnewrow{names} \DTLnewdbentry{names}{name}{Hong Liu}
\DTLnewrow{names} \DTLnewdbentry{names}{name}{Le Su}
\DTLnewrow{names} \DTLnewdbentry{names}{name}{Senlin Zhu}
\DTLnewrow{names} \DTLnewdbentry{names}{name}{Chaokun Yang}
\DTLnewrow{names} \DTLnewdbentry{names}{name}{Guowei Wang}
\DTLnewrow{names} \DTLnewdbentry{names}{name}{Yuhong Guo}
\DTLnewrow{names} \DTLnewdbentry{names}{name}{Fagui Mao}
\DTLnewrow{names} \DTLnewdbentry{names}{name}{Tao Wu}
\DTLnewrow{names} \DTLnewdbentry{names}{name}{Jianxin Lai}
\DTLnewrow{names} \DTLnewdbentry{names}{name}{Deng Zhao}
\DTLnewrow{names} \DTLnewdbentry{names}{name}{Bin Hu}
\DTLnewrow{names} \DTLnewdbentry{names}{name}{Xiaodong Yan}
\DTLnewrow{names} \DTLnewdbentry{names}{name}{Cai Chen}
\DTLnewrow{names} \DTLnewdbentry{names}{name}{Liangcheng Fu}
\DTLnewrow{names} \DTLnewdbentry{names}{name}{Jiaming Liu}
\DTLnewrow{names} \DTLnewdbentry{names}{name}{Shaomian Zheng}
\DTLnewrow{names} \DTLnewdbentry{names}{name}{Wang Ren}
\DTLnewrow{names} \DTLnewdbentry{names}{name}{Qi Zuo}
\DTLnewrow{names} \DTLnewdbentry{names}{name}{Zhizhen Liu}
\DTLnewrow{names} \DTLnewdbentry{names}{name}{Qiang Cheng}
\DTLnewrow{names} \DTLnewdbentry{names}{name}{Donghai You}
\DTLnewrow{names} \DTLnewdbentry{names}{name}{Jiannan Shi}
\DTLnewrow{names} \DTLnewdbentry{names}{name}{Bin Jing}
\DTLnewrow{names} \DTLnewdbentry{names}{name}{Fanzhuang Meng}
\DTLnewrow{names} \DTLnewdbentry{names}{name}{Jin Yang}
\DTLnewrow{names} \DTLnewdbentry{names}{name}{Shaofei Wang}
\DTLnewrow{names} \DTLnewdbentry{names}{name}{Chengyao Wen}
\DTLnewrow{names} \DTLnewdbentry{names}{name}{Longfei Zheng}
\DTLnewrow{names} \DTLnewdbentry{names}{name}{Zhankai Xu}
\DTLnewrow{names} \DTLnewdbentry{names}{name}{Zhengke Gui}
\DTLnewrow{names} \DTLnewdbentry{names}{name}{Qitao Shi}
\DTLnewrow{names} \DTLnewdbentry{names}{name}{Linfeng Shi}
\DTLnewrow{names} \DTLnewdbentry{names}{name}{Tao Zhang}
\DTLnewrow{names} \DTLnewdbentry{names}{name}{Junbo Zhao}
\DTLnewrow{names} \DTLnewdbentry{names}{name}{Zhenzhong Lan}
\DTLnewrow{names} \DTLnewdbentry{names}{name}{Xudong Wang}
\DTLnewrow{names} \DTLnewdbentry{names}{name}{Jun Zhou$^{\dagger}$}
\DTLnewrow{names} \DTLnewdbentry{names}{name}{Zhiqiang Zhang$^{\dagger}$}
\DTLnewrow{names} \DTLnewdbentry{names}{name}{Meng Li}
\DTLnewrow{names} \DTLnewdbentry{names}{name}{Yalin Zhang}
\DTLnewrow{names} \DTLnewdbentry{names}{name}{Lei Chen}
\DTLnewrow{names} \DTLnewdbentry{names}{name}{Chao Huang}
\DTLnewrow{names} \DTLnewdbentry{names}{name}{Zhe Li}
\DTLnewrow{names} \DTLnewdbentry{names}{name}{Wang Hong}
\DTLnewrow{names} \DTLnewdbentry{names}{name}{Xiangchun Wang}
\DTLnewrow{names} \DTLnewdbentry{names}{name}{Congqi Li}
\DTLnewrow{names} \DTLnewdbentry{names}{name}{Yongzhen Guo}
\DTLnewrow{names} \DTLnewdbentry{names}{name}{Yuanyuan Wang}
\DTLnewrow{names} \DTLnewdbentry{names}{name}{Feng Xu}
\DTLnewrow{names} \DTLnewdbentry{names}{name}{Quanrui Guo}
\DTLnewrow{names} \DTLnewdbentry{names}{name}{Yilong Wang}
\DTLnewrow{names} \DTLnewdbentry{names}{name}{Xuemin Yang}
\DTLnewrow{names} \DTLnewdbentry{names}{name}{Chao Zhang}
\DTLnewrow{names} \DTLnewdbentry{names}{name}{Cheng Lin}
\DTLnewrow{names} \DTLnewdbentry{names}{name}{Lisha Liao}
\DTLnewrow{names} \DTLnewdbentry{names}{name}{Li Tang}
\DTLnewrow{names} \DTLnewdbentry{names}{name}{Baihui Li}
\DTLnewrow{names} \DTLnewdbentry{names}{name}{Wenbo Yu}
\DTLnewrow{names} \DTLnewdbentry{names}{name}{Dingbo Yuan}
\DTLnewrow{names} \DTLnewdbentry{names}{name}{Jianwen Wang}
\DTLnewrow{names} \DTLnewdbentry{names}{name}{Haonan Zheng}
\DTLnewrow{names} \DTLnewdbentry{names}{name}{Guojie Li}
\DTLnewrow{names} \DTLnewdbentry{names}{name}{Yingying Xu}
\DTLnewrow{names} \DTLnewdbentry{names}{name}{Mingchun Chen}
\DTLnewrow{names} \DTLnewdbentry{names}{name}{Yuefan Wang}
\DTLnewrow{names} \DTLnewdbentry{names}{name}{Siba Chen}
\DTLnewrow{names} \DTLnewdbentry{names}{name}{Tiwei Bie}
\DTLnewrow{names} \DTLnewdbentry{names}{name}{Zehuan Li}
\DTLnewrow{names} \DTLnewdbentry{names}{name}{Tianyu Zhou}
\DTLnewrow{names} \DTLnewdbentry{names}{name}{Anqi Shen}
\DTLnewrow{names} \DTLnewdbentry{names}{name}{Weihua Chen}
\DTLnewrow{names} \DTLnewdbentry{names}{name}{Tianyu Zhang}
\DTLnewrow{names} \DTLnewdbentry{names}{name}{Jianhao Fu}
\DTLnewrow{names} \DTLnewdbentry{names}{name}{Wengang Zheng}
\DTLnewrow{names} \DTLnewdbentry{names}{name}{Lianhao Xu}
\DTLnewrow{names} \DTLnewdbentry{names}{name}{Yicheng Shan}

\DTLsort{name}{names}

\large{Authors are listed \textbf{alphabetically by the first name}.} 

\large{
\begin{multicols}{3}
\raggedcolumns
Ling Team\\
\DTLforeach*{names}{\thename=name}{\thename\\}
\end{multicols}}

$^{\dagger}$ denotes corresponding authors.

\clearpage

\bibliographystyle{assets/plainnat}
\bibliography{main}

\clearpage
\beginappendix
\section{Related Work}
\subsection{Stable Reinforcement Learning} 
To accelerate the training process of reinforcement learning on large-scale language models, current RL frameworks employ different backend implementations for training and inference stages. However, such heterogeneity inevitably introduces unaligned calculations of token probability, which consequently brings instability issues to the training process. To solve this problem, GSPO~\citep{zheng2025gspo} previously employed a routing replay training strategy, which cached the activated experts in the old policy model with parameter $\ \theta_{\text{old}}$ in advance and replayed these routing modes in the current policy model when computing the importance ratios. However, the routing replay strategy will increase both computation and memory overhead to the RL framework. Rather, they proposed an algorithm-level solution, which defines a sequence-level importance ratio and performs clipping on that, avoiding training crashes caused by high-variance gradients. In comparison to GSPO, the techniques employed by IcePop are independent of sequence-level optimization, suggesting that they can be incorporated into ongoing research efforts. TIS~\citep{yao2025offpolicy} also addresses the probability discrepancy problem that exists between the training and inference stages. They proposed using importance sampling correction to address the divergence and demonstrated promising results. Different from our work, for gradients exhibiting significant divergence, \textit{i.e.}, tokens outside our masking range, TIS opts to keep updating those tokens by applying a moderating coefficient to their gradients. Empirically, we find that as training progresses, these minor disturbances can gradually amplify and ultimately lead to a plateau in benchmark performance.

\subsection{Efficient Reinforcement Learning}
For reasoning models that generate long sequences, the rollout phase often consumes a large percentage of training resources and time. To improve the training efficiency, TPPO~\citep{fan2025tppo} extends PPO by implementing a truncated rollout strategy and eliminating biased estimation from incomplete trajectories. Prior work~\citep{fu2025areal} incorporates a high-efficiency generation management suitable for an asynchronous RL system with interruptible rollout workers. 
In contrast to the existing works addressing low utilization of computing resources at the inference stage, C3PO++ stands out by applying a dynamic cutoff to rollout generation based on a token budget, balancing the stabilization of training updates with improved inference efficiency. Meanwhile, it supports high-throughput inference procedures in parallel, maximizing compute utilization and alleviating the rollout bottleneck during training. Our empirical analysis shows that the design of C3PO++ improves inference efficiency without sacrificing training performance, providing a solid foundation for integrating advanced training algorithms.

\subsection{Reinforcement Learning Infrastructure}
Training large-scale reinforcement learning models, particularly at the trillion-parameter level, imposes extraordinary demands on the underlying infrastructure. Below, we review how existing systems address the following three primary challenges and highlight their limitations, which collectively motivated the design of ASystem.

\begin{itemize}
\item \textbf{Memory Efficiency:} Managing the massive GPU memory footprint of model states, activations, and experience data throughout the training cycle without introducing significant overhead remains an open problem. Popular training and inference frameworks—including vLLM~\citep{kwon2023efficient}, SGLang~\citep{zheng2024sglangefficientexecutionstructured}, Megatron-LM~\cite{shoeybi2020megatronlmtrainingmultibillionparameter}, and RL-specific systems such as VeRL~\citep{sheng2024hybridflow} and OpenRLHF~\citep{hu2025openrlhfeasytousescalablehighperformance}—typically retain model states and communication groups in GPU memory throughout execution, leading to static and inefficient memory usage. The NVIDIA Collective Communication Library (NCCL) does not natively support live memory offloading. Although it provides plugin interfaces, a general and efficient solution remains absent. A recent effort, Slime~\citep{slime_github}, proposes destroying and re-creating NCCL communication groups to free memory, but the subsequent re-initialization overhead is prohibitive at scale, often consuming several minutes and severely disrupting training stability.

\item \textbf{State Synchronization:} Efficiently and reliably propagating model weights across distributed training and rollout workers is essential for policy consistency. Early RL frameworks such as OpenRLHF~\citep{hu2025openrlhfeasytousescalablehighperformance} relied on distributed file systems (e.g., NFS) for checkpoint sharing, suffering from limited bandwidth and throughput with synchronization latencies of tens of minutes. More recent systems, including VeRL~\citep{sheng2024hybridflow} and the checkpoint-engine~\citep{kimi_engine}, have shifted toward using NCCL for direct peer-to-peer weight synchronization. While this reduces dependency on shared storage, the approach still suffers from redundant data movement and poor performance in handling numerous small tensors, keeping end-to-end synchronization in the minute range.

\item \textbf{System Orchestration \& Reproducibility:} Providing a flexible, stable, and deterministic execution environment is crucial for rapid iteration. Existing frameworks like OpenRLHF and VERL are typically architected with tight coupling between components, making backend integration costly and slow. Furthermore, rollout behavior in these systems remains inherently non-deterministic due to fluctuating batch sizes and the non-associative nature of floating-point arithmetic across distributed workers~\citep{he2025nondeterminism}. The absence of systematic support for deterministic execution and metric alignment complicates reliable ablation studies and hinders root-cause analysis when reward improvement stagnates.

\end{itemize}

In contrast to the limitations above, ASystem is designed from the ground up to holistically address these core infrastructure challenges. Our \textbf{AMem} component enables efficient live memory offloading without the excessive re-initialization overhead seen in approaches like Slime~\citep{slime_github}. The \textbf{AState} system overcomes the inefficiencies of existing synchronization methods through a zero-redundancy P2P transmission mechanism and a hardware-aware multi-transport communication layer, supporting sub-second weight synchronization and in-place updates. Finally, the \textbf{Hybrid Runtime} offers a unified, RL-specific API that abstracts away backend complexities in networking, memory management, and weight exchange. Crucially, it incorporates a comprehensive precision alignment mechanism—comprising Tracker, Analyzer, and Replayer modules—to ensure end-to-end reproducibility and deterministic execution. Together, these components allow ASystem to deliver the stability, efficiency, and developer agility necessary for training massive models such as our \model{}.

\section{Theoretical Analysis for IcePop}\label{app:anlaysis_icepop}

\begin{theorem}[Compounding probability discrepancy]\label{thm:prob_dis_full}
Let $\pi_{\mathrm{infer}}(\cdot;\theta)$ and $\pi_{\mathrm{train}}(\cdot;\theta)$ be the policy model loaded by inference and training engines, and denote the probability discrepancy as
\[
\delta(\theta)\;=\;D_{\mathrm{KL}}\!\big(\pi_{\mathrm{infer}}(\cdot;\theta)\,\|\,\pi_{\mathrm{train}}(\cdot;\theta)\big), 
\qquad
\delta_t:=\delta(\theta_t).
\]
Consider the update with a step size of $\mu$ 
\[
\theta_{t+1}=\theta_t+\mu\,g_t,\qquad 
g_t \;=\; \mathbb{E}_{a\sim \pi_{\mathrm{infer}}(\cdot;\theta_t)}\!\big[A(a)\,\nabla_\theta\log\pi_{\mathrm{train}}(a;\theta_t)\big],
\]
and write $g_t=g_t^\star+b_t$ with
\[
g_t^\star \;=\; \mathbb{E}_{a\sim \pi_{\mathrm{train}}(\cdot;\theta_t)}\!\big[A(a)\,\nabla_\theta\log\pi_{\mathrm{train}}(a;\theta_t)\big],\qquad
b_t:=g_t-g_t^\star .
\]

Assume there exists a neighborhood that contains the iterates $\{\theta_t\}$ where

\smallskip
\noindent{(A1) $L$-smoothness.} $\delta$ is differentiable with $L$-Lipschitz gradient, i.e.
\[
\big|\,\delta(\theta+\Delta)-\delta(\theta)-\langle\nabla \delta(\theta),\Delta\rangle\,\big|
\;\le\; \tfrac{L}{2}\,\|\Delta\|^2.
\]

\noindent{(A2) Bias alignment.} There is $c>0$ such that
\[
\big\langle \nabla \delta(\theta_t),\, b_t \big\rangle \;\ge\; c\,\delta_t .
\]

\noindent{(A3) Bounded on-policy drift.} There is $M\ge 0$ such that
\[
\big|\big\langle \nabla \delta(\theta_t),\, g_t^\star \big\rangle\big| \;\le\; M .
\]

\noindent{(A4) Local gradient bound.} There is $G\ge 0$ such that $\|g_t\|\le G$.

\smallskip
Then for any stepsize $\mu\in(0,\bar\mu]$ (with $\bar\mu$ chosen so that (A1)–(A4) hold throughout the trajectory), there exist constants
\[
\eta:=c>0, 
\qquad 
\kappa:=M+\tfrac{L}{2}\,\bar\mu\,G^2\;\ge 0,
\]
such that
\[
\delta_{t+1}\;\ge\; \big(1+\eta\,\mu\big)\,\delta_t \;-\; \kappa\,\mu .
\]
Consequently, if $\delta_t \,\ge\, \delta_c:=\tfrac{2\kappa}{\eta}$, then
\[
\delta_{t+1}\;\ge\;\big(1+\tfrac{\eta}{2}\,\mu\big)\,\delta_t .
\]
\end{theorem}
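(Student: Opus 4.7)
The plan is to run a one-step Taylor expansion of $\delta$ around $\theta_t$, substitute the update $\theta_{t+1}=\theta_t+\mu g_t$, split the gradient $g_t=g_t^\star+b_t$, and then read off the desired recursion by plugging in the four assumptions in the obvious order. The target inequality is a first-order-in-$\mu$ lower bound on $\delta_{t+1}$, so the natural tool is (A1), which provides exactly such a bound with an $O(\mu^2)$ remainder; the remainder will be absorbed into the constant $\kappa$ using (A4).

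First, by the $L$-smoothness in (A1) applied with $\Delta=\mu g_t$, I would write
\begin{equation*}
\delta_{t+1}\;\ge\;\delta_t+\mu\,\langle\nabla\delta(\theta_t),g_t\rangle-\tfrac{L\mu^2}{2}\|g_t\|^2.
\end{equation*}
Next I would use the decomposition $g_t=g_t^\star+b_t$ to split the inner product into an on-policy term and a bias term. The bias term is lower-bounded by $c\,\delta_t$ using (A2); the on-policy term is lower-bounded by $-M$ using (A3) (keeping the sign that helps us); and the quadratic remainder is bounded by $G^2$ using (A4). Collecting everything gives
\begin{equation*}
\delta_{t+1}\;\ge\;(1+c\mu)\,\delta_t-M\mu-\tfrac{L\mu^2}{2}G^2.
\end{equation*}
Since $\mu\le\bar\mu$, I would bound $\tfrac{L\mu^2}{2}G^2\le\tfrac{L\bar\mu}{2}G^2\cdot\mu$ to linearize the $O(\mu^2)$ term in $\mu$. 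Setting $\eta:=c$ and $\kappa:=M+\tfrac{L\bar\mu}{2}G^2$ recovers the first claimed bound $\delta_{t+1}\ge(1+\eta\mu)\delta_t-\kappa\mu$.

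For the ``Consequently'' clause, the step is purely algebraic: the threshold $\delta_c=2\kappa/\eta$ is chosen precisely so that $\kappa\mu\le\tfrac{\eta\mu}{2}\delta_t$ whenever $\delta_t\ge\delta_c$, in which case
\begin{equation*}
\delta_{t+1}\;\ge\;(1+\eta\mu)\delta_t-\tfrac{\eta\mu}{2}\delta_t\;=\;\bigl(1+\tfrac{\eta}{2}\mu\bigr)\delta_t,
\end{equation*}
which is the asserted contraction-away-from-$\pi_{\mathrm{train}}$ bound, and iterating it gives the compounding picture advertised in the informal statement in the main text.

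The main obstacle is not really in the calculation, which is essentially a three-line smoothness-plus-decomposition argument, but in being careful about the \emph{scope} of the assumptions. Conditions (A1)--(A4) are required to hold throughout the trajectory $\{\theta_t\}$, so implicitly one must choose $\bar\mu$ small enough that the iterates remain in the neighborhood where the constants $L,c,M,G$ are valid; otherwise (A2) in particular (the bias-alignment hypothesis that encodes the intuition that off-policy updates systematically push $\pi_{\mathrm{train}}$ away from $\pi_{\mathrm{infer}}$) could fail and the lower bound would break down. A clean write-up should therefore explicitly state the joint condition on $\bar\mu$ that simultaneously guarantees (A1)--(A4) along the iterates, and then the recursion follows by the combination above.
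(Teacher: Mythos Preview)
Your proposal is correct and follows essentially the same argument as the paper: apply the $L$-smoothness lower bound with $\Delta=\mu g_t$, split $\langle\nabla\delta(\theta_t),g_t\rangle$ via $g_t=g_t^\star+b_t$, invoke (A2)--(A4) in the obvious order, linearize the $O(\mu^2)$ remainder using $\mu\le\bar\mu$, and then handle the ``Consequently'' clause by the same threshold arithmetic. The only cosmetic difference is that the paper rewrites $(1+\eta\mu)\delta_t-\kappa\mu=(1+\tfrac{\eta}{2}\mu)\delta_t+(\tfrac{\eta}{2}\delta_t-\kappa)\mu$ before observing nonnegativity, whereas you substitute $\kappa\mu\le\tfrac{\eta\mu}{2}\delta_t$ directly; these are the same computation.
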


\begin{proof}
By (A1) with $\Delta=\mu g_t$,
\[
\delta_{t+1}
= \delta(\theta_t+\mu g_t)
\;\ge\; \delta_t + \mu \big\langle \nabla \delta(\theta_t), g_t \big\rangle - \tfrac{L}{2}\mu^2\|g_t\|^2.
\]
Decompose $g_t=g_t^\star+b_t$ and apply (A2)–(A3):
\[
\big\langle \nabla \delta(\theta_t), g_t \big\rangle
= \big\langle \nabla \delta(\theta_t), g_t^\star \big\rangle
  + \big\langle \nabla \delta(\theta_t), b_t \big\rangle
\;\ge\; -M + c\,\delta_t .
\]
Use (A4) to bound the quadratic term:
\[
-\tfrac{L}{2}\mu^2\|g_t\|^2 \;\ge\; -\tfrac{L}{2}\mu^2 G^2 \;\ge\; -\tfrac{L}{2}\bar\mu\,\mu\,G^2 .
\]
Combine the inequalities to obtain
\[
\delta_{t+1}
\;\ge\; \delta_t + \mu\,(c\,\delta_t - M) - \tfrac{L}{2}\bar\mu\,\mu\,G^2
\;=\; (1+\eta\mu)\,\delta_t \;-\; \big(M+\tfrac{L}{2}\bar\mu G^2\big)\mu .
\]
Setting $\eta:=c$ and $\kappa:=M+\tfrac{L}{2}\bar\mu G^2$ gives the first claim:
\[
\delta_{t+1}\;\ge\;(1+\eta\mu)\,\delta_t - \kappa\,\mu .
\]

For the compounding form, rewrite
\[
(1+\eta\mu)\,\delta_t - \kappa\mu
\;=\; \Big(1+\tfrac{\eta}{2}\mu\Big)\delta_t \;+\; \Big(\tfrac{\eta}{2}\delta_t - \kappa\Big)\mu .
\]
Hence, if $\delta_t \ge 2\kappa/\eta$, the last term is nonnegative and
\[
\delta_{t+1}\;\ge\;\Big(1+\tfrac{\eta}{2}\mu\Big)\delta_t .
\]
\end{proof}

\section{Preliminary Analysis for IcePop on Ring-mini-2.0}
We also analyze IcePop in terms of the probability discrepancy between training and inference engines, training stability, exploration ability, and ill-conditioned tokens.
\paragraph{Training Stability.} We believe that a stable training process serves as a solid foundation and sufficient space to showcase the power of reinforcement learning. It is worth noting that both IcePop and TIS mitigate the instability of RL training within 600 gradient steps (see Figure \ref{fig:training_stability}), avoiding rapid training crashes occurring in the baseline setting. 

\begin{figure}[!hbt]
\centering
\begin{subfigure}[b]{0.45\textwidth}
\centering
\includegraphics[width=\textwidth]{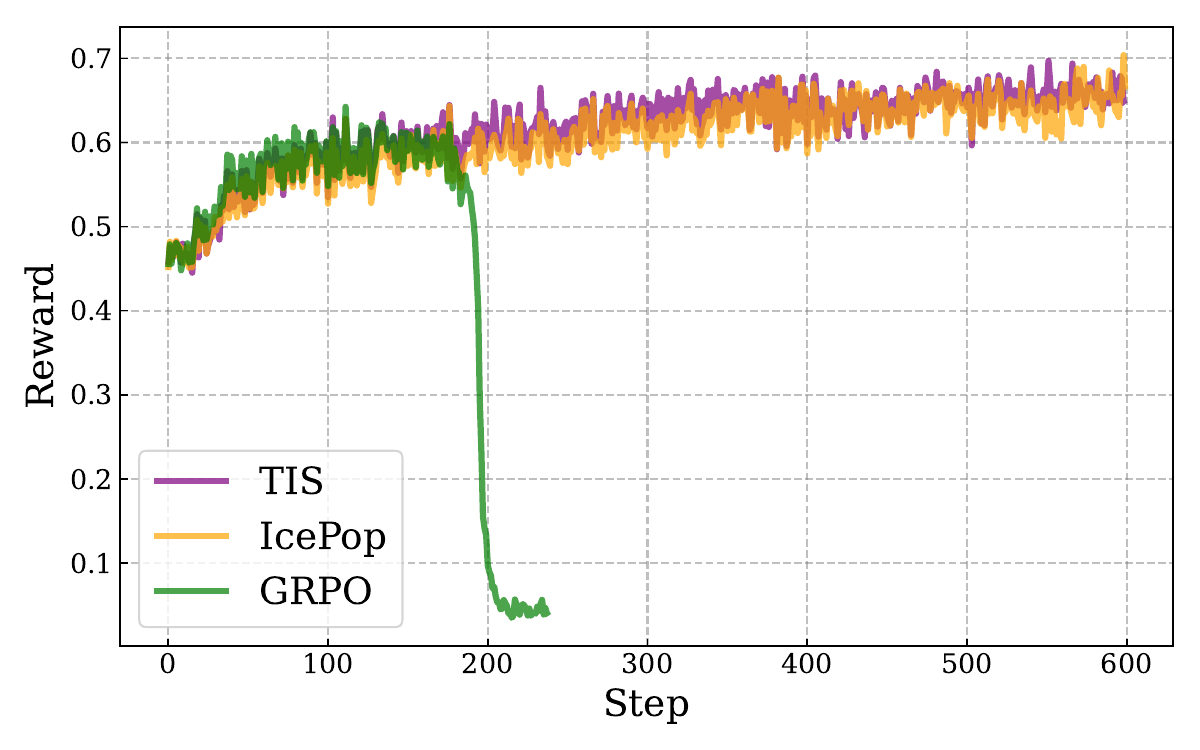}
\end{subfigure}
\hspace{2mm}
\begin{subfigure}[b]{0.45\textwidth}
\centering
\includegraphics[width=\textwidth]{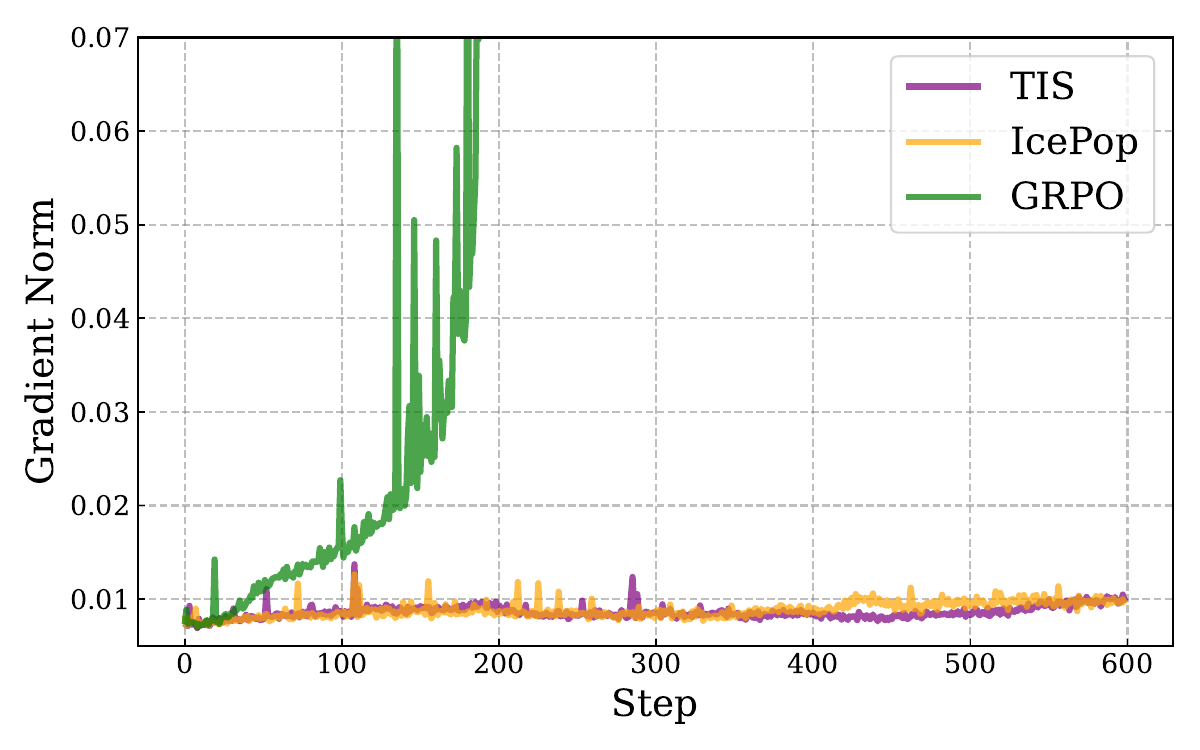}
\end{subfigure}
\caption{(Left) Training reward. The reward of baseline collapses after 180–200 steps. Both IcePop and TIS maintain stable growth. (Right) Gradient norm. Baseline explodes, IcePop and TIS remain stable.}
\label{fig:training_stability}
\end{figure}

\paragraph{Probability Discrepancy.} Without addressing the mismatch issues, the probability difference grows rapidly, as shown in the baseline setting. In contrast, both TIS and IcePop keep the KL divergence of training-inference probability within a reasonable range. Although the maximum probability difference rises for all three methods as training proceeds, the discrepancy of IcePop remains relatively low and even decreases within 400 steps (see the left in Figure \ref{fig:icepop-logp}). We also notice that TIS consistently shows larger extreme discrepancies and faster growth than ours, probably due to including the noisy policy updates during training.

\paragraph{Exploration Space.} In the right of Figure \ref{fig:icepop-logp}, we observed that the log probabilities of IcePop consistently maintain relatively lower values than those of TIS, which implicitly indicates that our method avoids overconfident predictions, thus ensuring a larger scope for exploring space, where low-probability tokens are more likely to be chosen, eventually increasing the diversity of responses.
    
\begin{figure}[!htb]
  \centering
  \begin{subfigure}[b]{0.45\textwidth}
    \centering
    \includegraphics[width=\textwidth]{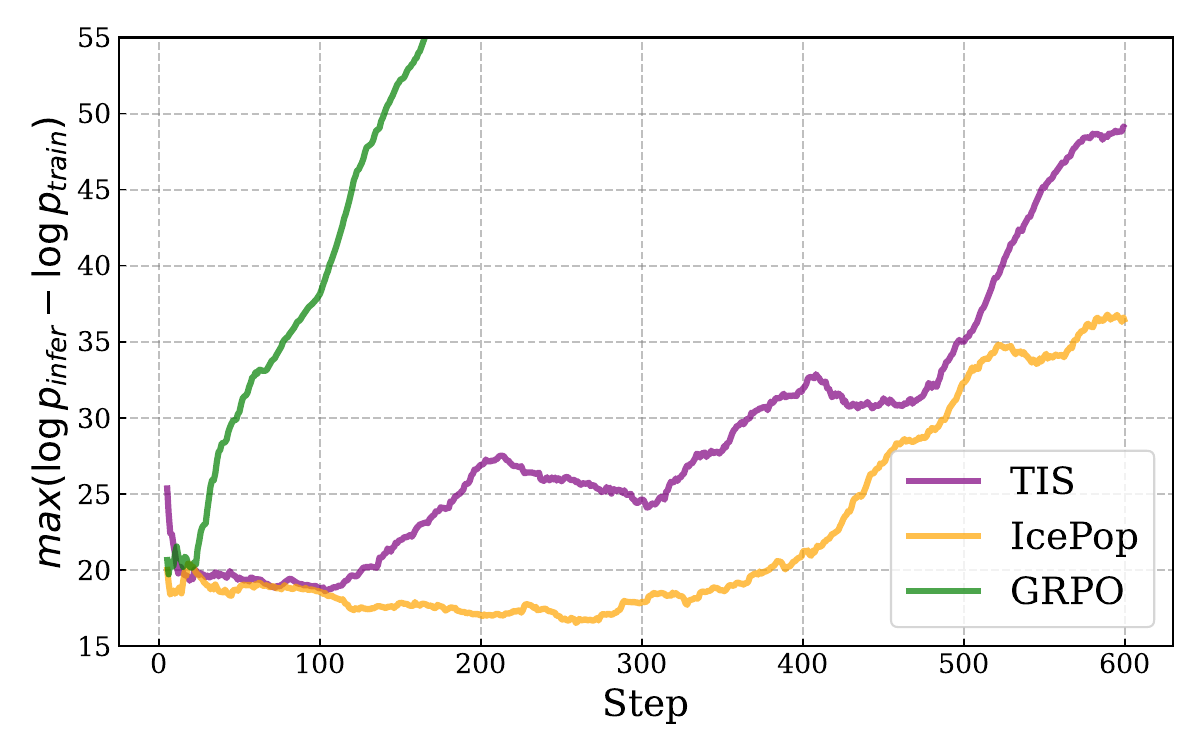}
  \end{subfigure}
  \hspace{2mm}
  \begin{subfigure}[b]{0.45\textwidth}
    \centering
\includegraphics[width=\textwidth]{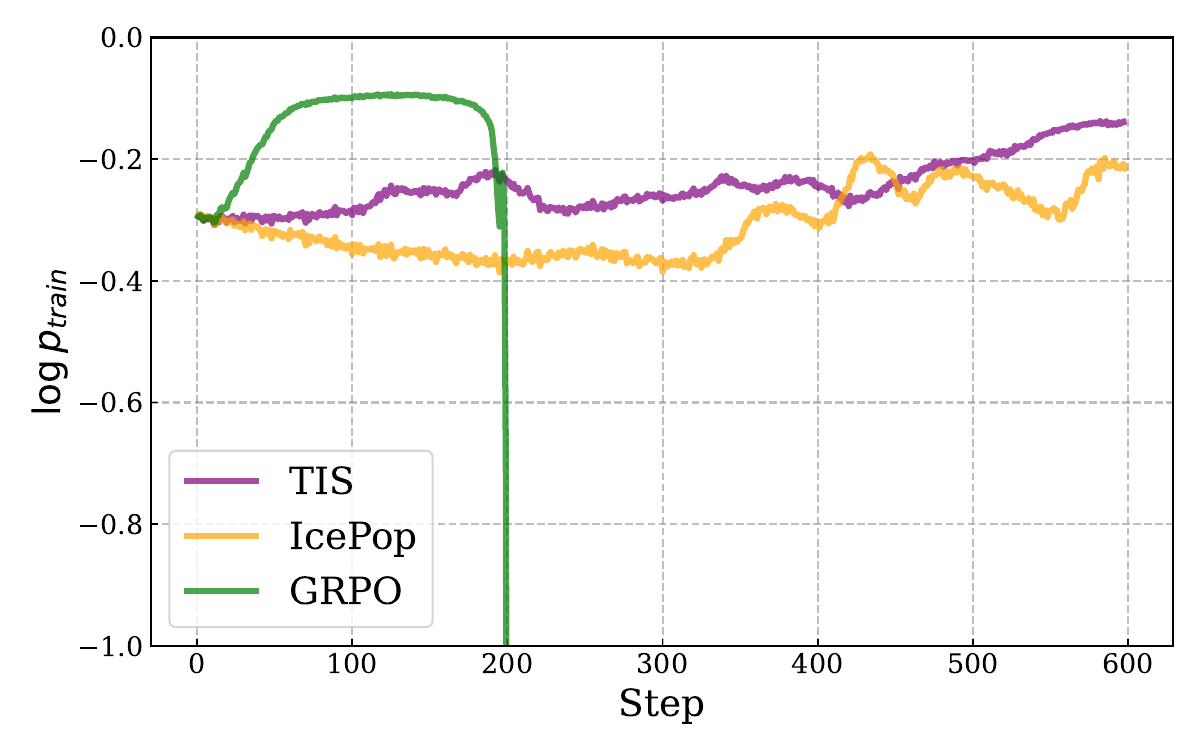}
  \end{subfigure}
  \caption{(Left) The maximum of probability discrepancy. (Right) The log probability of tokens. Baseline increases rapidly and drops to the bottom, while IcePop is relatively steady.}
\label{fig:icepop-logp}
\end{figure}
    
\paragraph{Ill-conditioned Tokens.} In our experiments, we found that the clipping ratio from our masking mechanism stays around 1–2‰ of training tokens (see Figure \ref{fig:ill_conditioned_tokens}). As training progresses, the clipping ratio rises sharply, suggesting that increasingly subtle but harmful gradient updates occur and necessitate a higher clipping ratio. We also conducted a detailed analysis of the clipped tokens. The following right figure shows that, compared to all tokens, clipped tokens have higher entropy, indicating that the clipped tokens play an important role in training.

\begin{figure}[!hbt]
\centering
\begin{subfigure}[b]{0.45\textwidth}
\centering
\includegraphics[width=\textwidth]{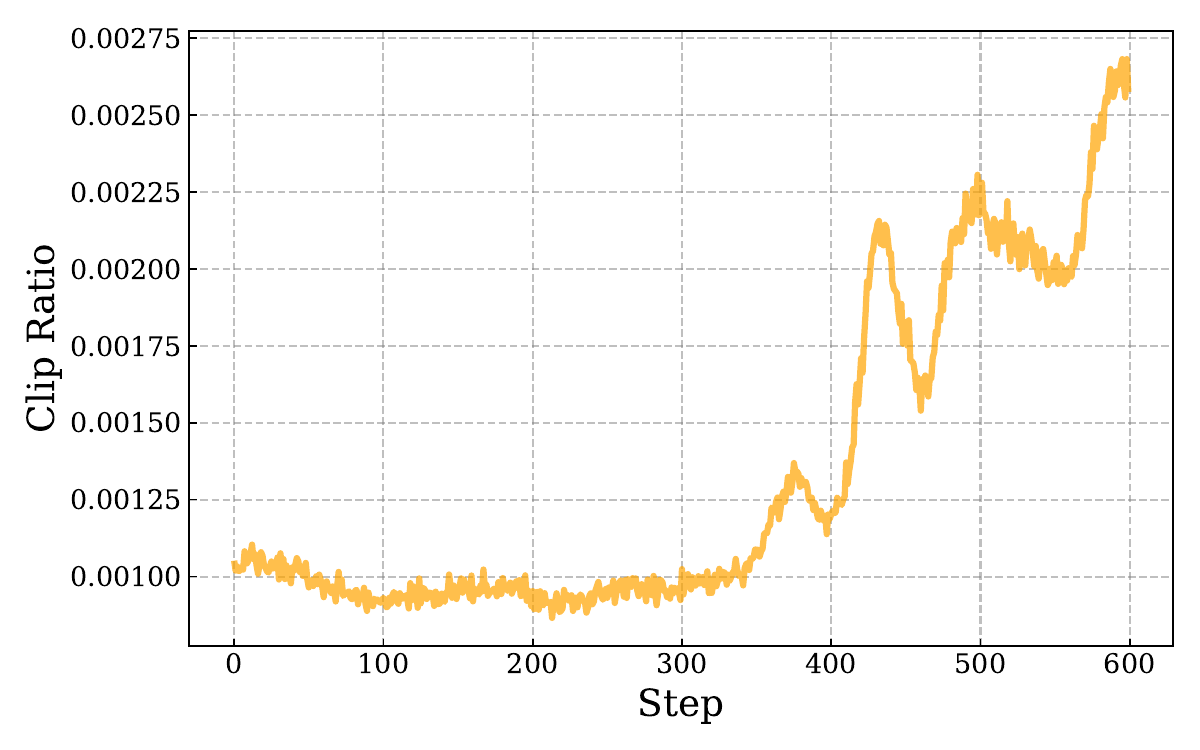}
\end{subfigure}
\hspace{2mm}
\begin{subfigure}[b]{0.49\textwidth}
\centering
\includegraphics[width=\textwidth]{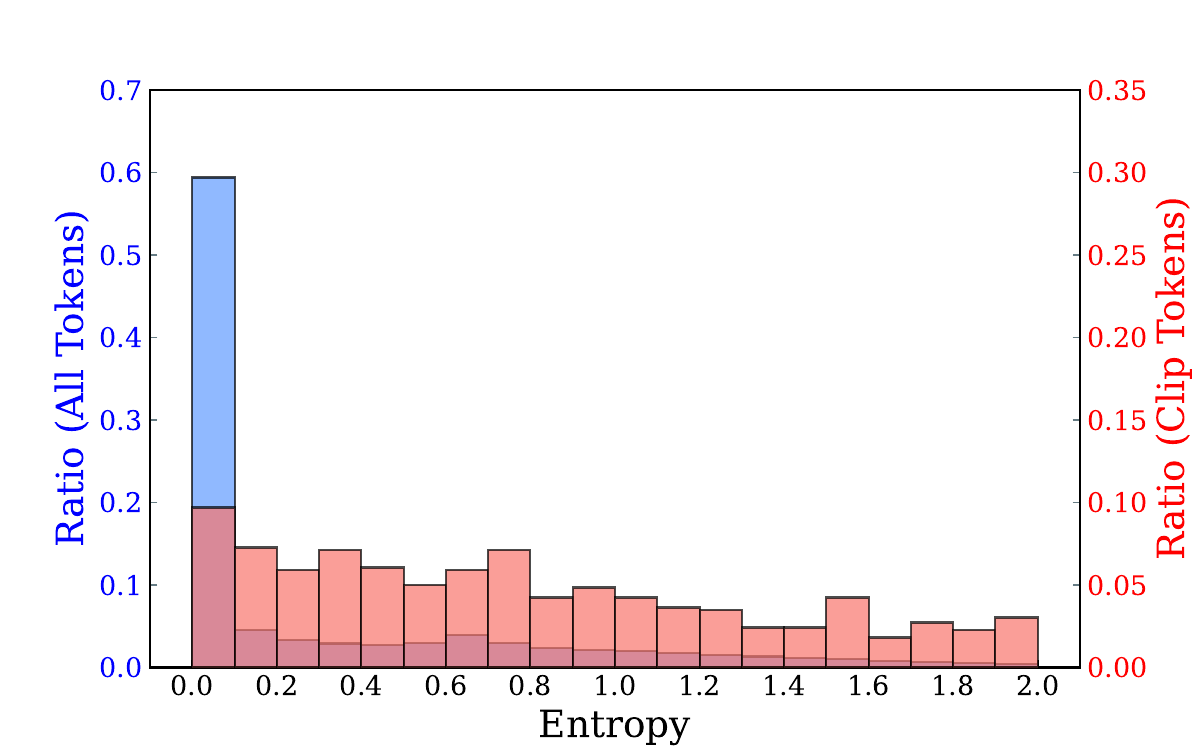}
\end{subfigure}
\caption{(Left) Clipping ratio. IcePop maintains ~1–2‰  of tokens clipped in our default setting. (Right) The comparisons of token entropy between all tokens and clipped tokens. Compared to all tokens, clipped tokens show a higher proportion of high-entropy tokens.}
\label{fig:ill_conditioned_tokens}
\end{figure}

\paragraph{Sensitivity Analysis.} We compare how different masking ranges for the calibration ratio affect training. Specifically, we experiment with three masking ranges: (1) $\alpha=0.5, \beta=5.0$ (default range), (2) $\alpha=0.5, \beta=2.0$ (narrow range), and (3) $\alpha=0.4, \beta=5.0$ (wider range). As shown in Figure \ref{fig:ablation_study},
\begin{enumerate}
    \item The default masking range $[0.5, 5.0]$ not only stabilizes training but also enriches sampling diversity.
    \item The narrow masking range $[0.5, 2.0]$ immediately destabilizes training, as shown in the volatility of gradient norm and the sharp increase in probability discrepancy.
    \item The wide masking range $[0.4, 5.0]$ still stabilizes training, yet includes tokens with higher log probability compared to the default setting.
\end{enumerate}

\begin{figure}[!hbt]
  \centering
  \begin{subfigure}[b]{0.45\textwidth}
    \centering
    \includegraphics[width=\textwidth]{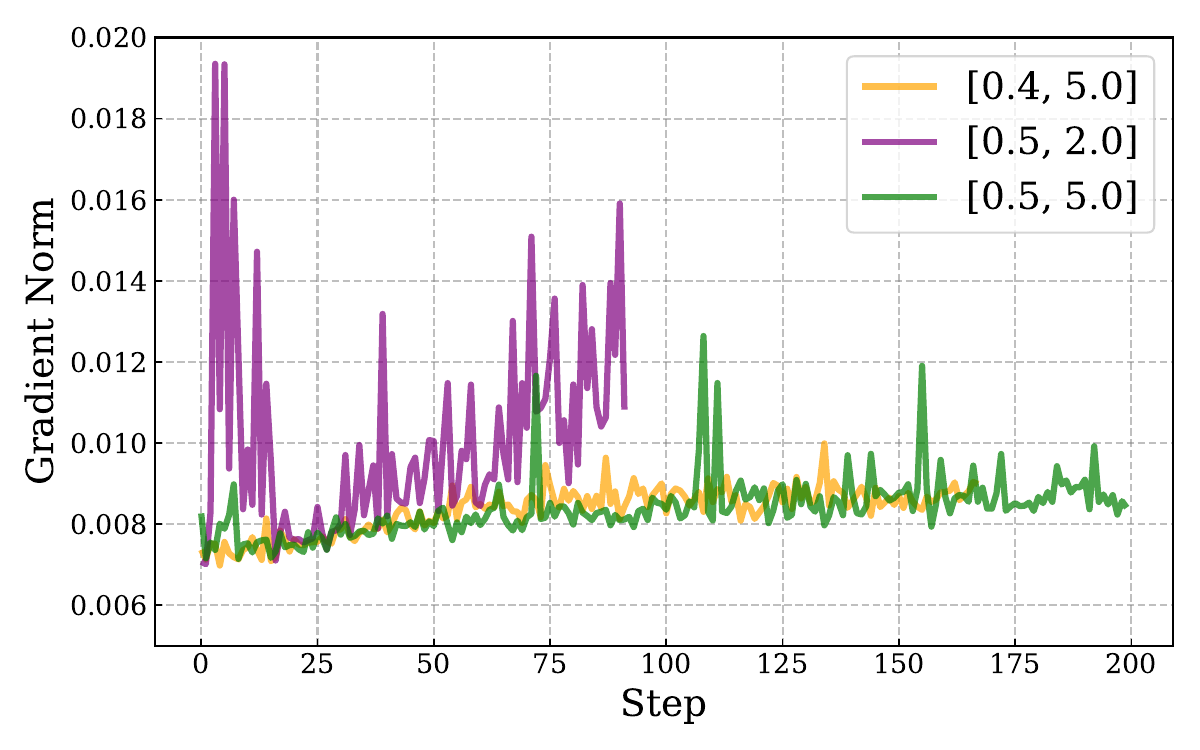}
  \end{subfigure}
  \hspace{2mm}
  \begin{subfigure}[b]{0.45\textwidth}
    \centering
\includegraphics[width=\textwidth]{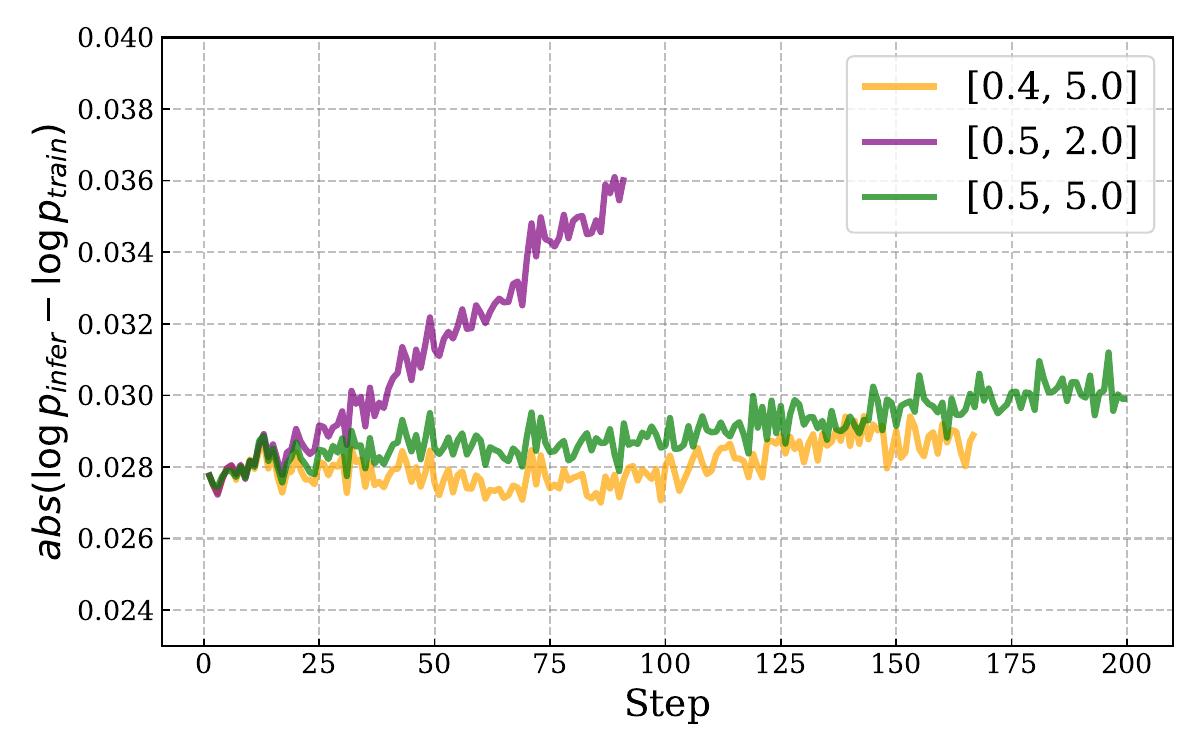}
  \end{subfigure}
    \begin{subfigure}[b]{0.45\textwidth}
    \centering
    \includegraphics[width=\textwidth]{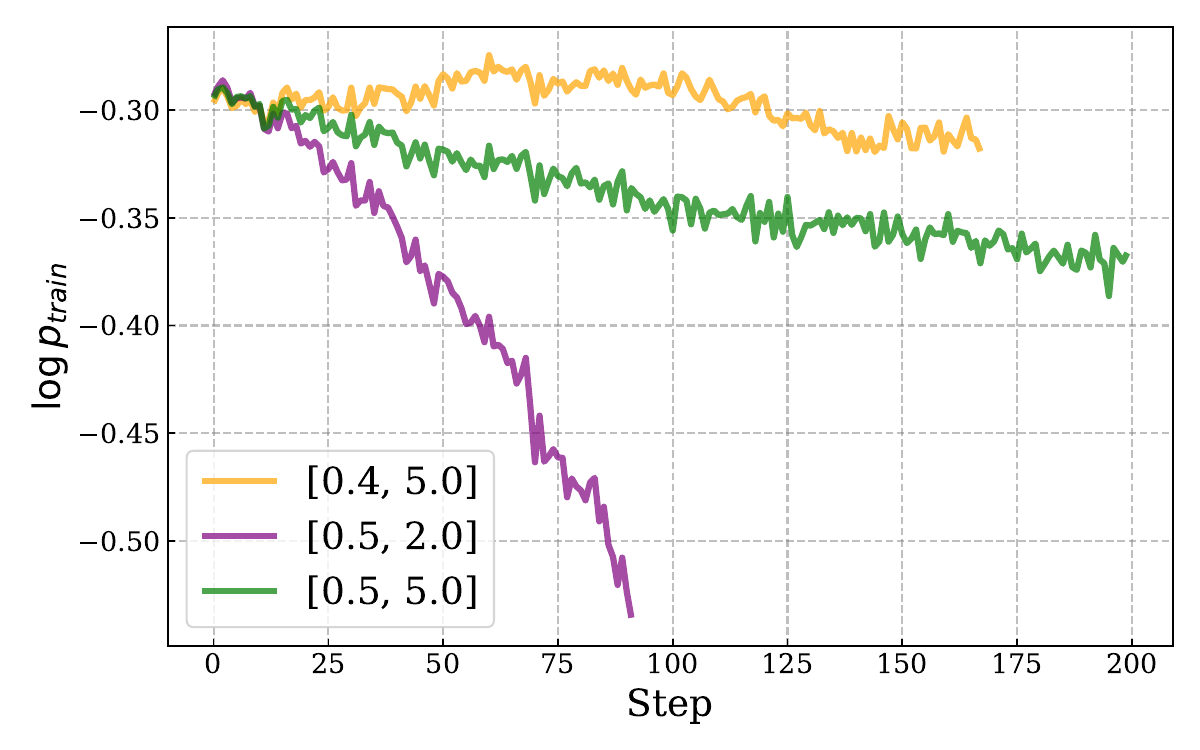}
  \end{subfigure}
  \hspace{2mm}
  \begin{subfigure}[b]{0.45\textwidth}
    \centering
\includegraphics[width=\textwidth]{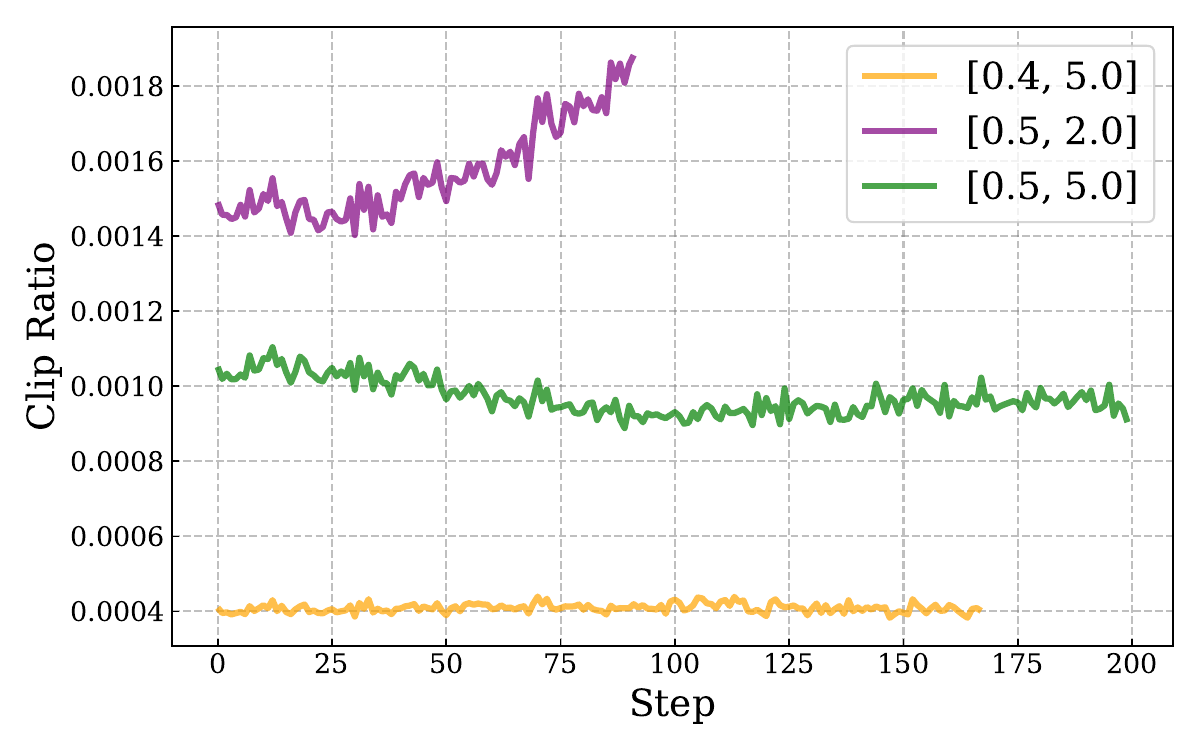}
  \end{subfigure}
  \caption{The training dynamics under different masking ranges.}
\label{fig:ablation_study}
\end{figure}

\section{Training Data Analysis}

This section analyzes the composition of our SFT and RL datasets. We first present the domain distribution of the SFT data in Figure~\ref{fig:sft_data_dist}, illustrating its diversity, which underpins the model's broad knowledge base. Subsequently, Figure~\ref{fig:rl_data_dist} details the complexity distribution of the RL data.

\begin{figure}[!htb]
    \centering
    \includegraphics[width=0.8\linewidth]{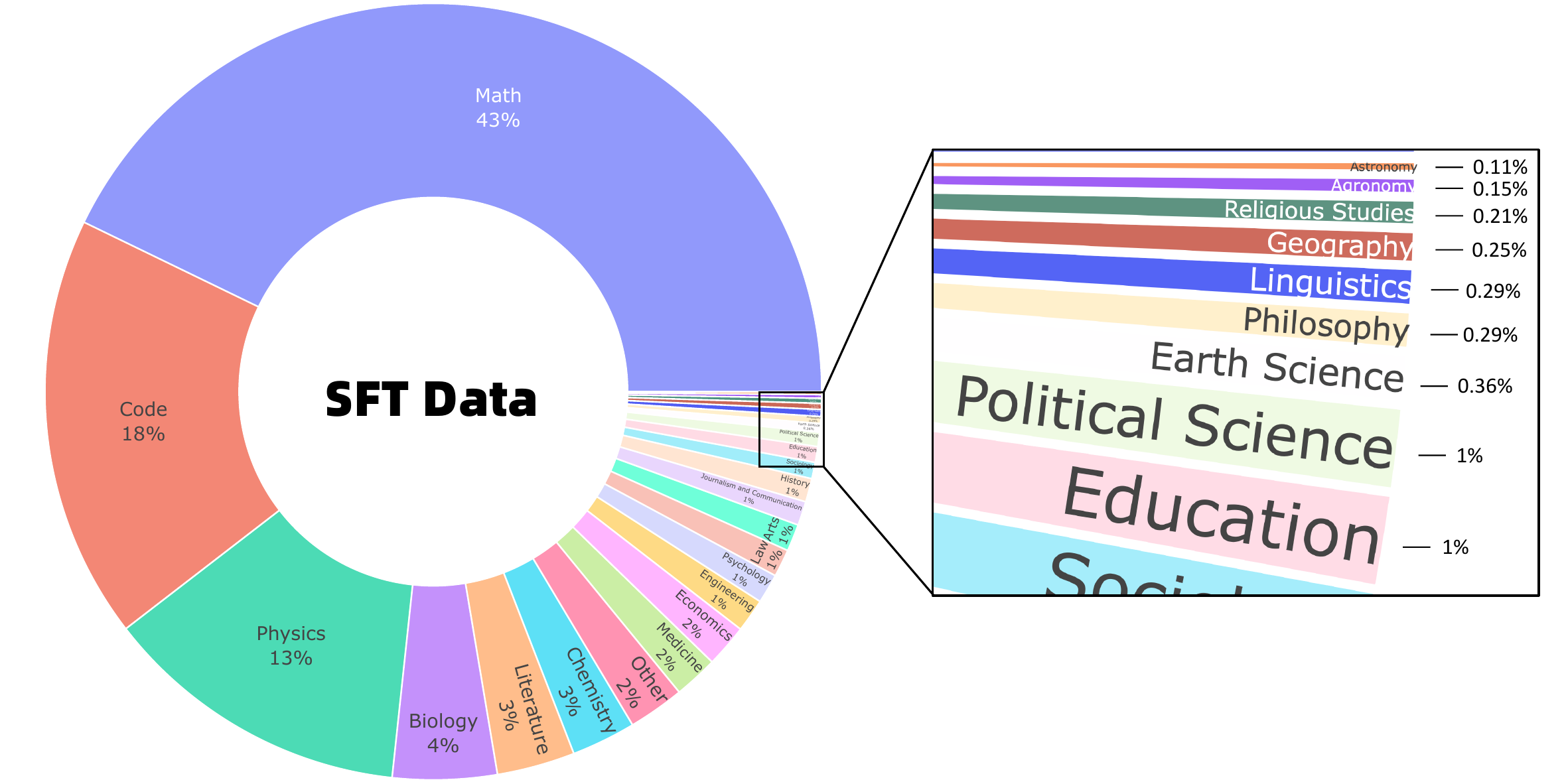}
    \caption{The domain distribution of SFT Data.}
    \label{fig:sft_data_dist}
\end{figure}

\begin{figure}[!htb]
    \centering
    \includegraphics[width=1\linewidth]{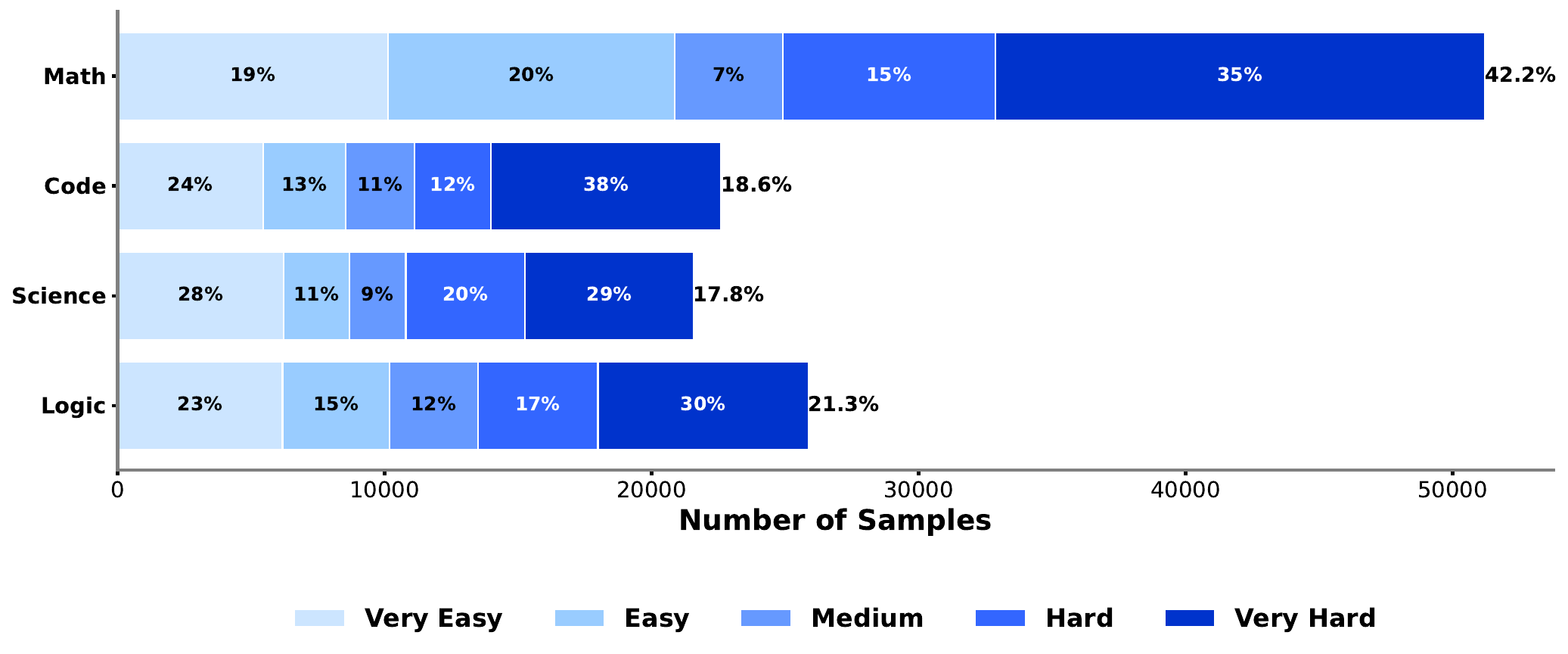}
    \caption{The difficulty distribution of RL Data.}
    \label{fig:rl_data_dist}
\end{figure}

\section{IMO Case}
\label{sec:imo}
As established in the main paper, our proposed model \model{} achieved a silver-medal performance at IMO 2025. The detail IMO 2025 Results from \model{} could refer to the AWorld~\footnote{\url{https://github.com/inclusionAI/AWorld/tree/main/examples/imo/samples/samples\%20from\%20Ring-1T}}. Below, we present a detailed solution from \model{} to the competition's first problem:

\paragraph{Question:} A line in the plane is called sunny if it is not parallel to any of the $x$–axis, the $y$–axis, or the line $x + y = 0$. Let $n \geq 3$ be a given integer. Determine all nonnegative integers $k$ such that there exist n distinct lines in the plane satisfying both of the following: 1. for all positive integers $a$ and $b$ with $a + b \leq n + 1$, the point $(a, b)$ lies on at least one of the lines; and 2. exactly $k$ of the $n$ lines are sunny.

\paragraph{Solution:} Generated by \model{}: 

\begin{tcolorbox}[
    width=0.85\linewidth,
    center,
    breakable,
    colback=white,
    colframe=black,
    boxrule=1pt,
    arc=4pt,
    fontupper=\scriptsize,
]
\section*{Summary}

\subsection*{a. Verdict}
I have found a complete solution. The nonnegative integers $k$ satisfying the conditions are precisely $k = 0$, $k = 1$, and $k = 3$. All other values of $k$ (i.e., even integers $k \geq 2$, odd integers $k \geq 5$, and $k > n$) are impossible.

\subsection*{b. Method Sketch}
The problem requires covering all points $(a, b) \in \mathbb{Z}^+ \times \mathbb{Z}^+$ with $a + b \leq n + 1$ using exactly $n$ distinct lines, where exactly $k$ are ``sunny'' (not parallel to the $x$-axis, $y$-axis, or $x + y = 0$).

\begin{itemize}
    \item \textbf{Key Observations:}
    \begin{itemize}
        \item The set of required points forms a triangular grid $P_n$ with $T(n) = \frac{n(n+1)}{2}$ points.
        \item Non-sunny lines fall into three categories: horizontal ($y = c$), vertical ($x = d$), and diagonal ($x + y = e$).
        \item A sunny line can cover at most $\left\lfloor \frac{k+1}{2} \right\rfloor$ points from any subset isomorphic to $P_k$, due to geometric constraints on collinearity in the grid.
    \end{itemize}
    
    \item \textbf{Critical Results:}
    \begin{itemize}
        \item \textbf{Even $k \geq 2$:} The minimum number of uncovered points $U(k) = \frac{k(k+1)}{2}$ exceeds the maximum coverage $k \cdot \frac{k}{2}$ by $k$ sunny lines, making it impossible.
        \item \textbf{Odd $k \geq 5$:} For $k = 2t - 1$ with $t \geq 3$, the minimal uncovered set $P_k$ has $t(2t - 1)$ points. While $k$ sunny lines could theoretically cover this if each had $t$ points, only 3 disjoint $t$-point sunny lines exist in $P_k$, leaving $2t(t - 2)$ points that cannot be covered by the remaining $2t - 4$ lines (as $2t(t - 2) > (2t - 4)(t - 1)$ for $t \geq 3$).
        \item \textbf{$k = 0, 1, 3$:} Explicit constructions show these are always possible:
        \begin{itemize}
            \item $k = 0$: Use $n$ diagonal lines $x + y = 2, \dots, n + 1$.
            \item $k = 1$: Use $n - 1$ horizontal lines and one sunny line through the single uncovered point.
            \item $k = 3$: Use $n - 3$ horizontal lines to leave a subgrid isomorphic to $P_3$, which can be partitioned into 3 sunny lines.
        \end{itemize}
    \end{itemize}
    
    \item \textbf{Construction Strategy:}
    \begin{itemize}
        \item For $k = 3$, the uncovered subgrid $P_3$ is covered by three specific sunny lines with slopes $1$, $-2$, and $-\frac{1}{2}$, verified explicitly for $P_3$ and generalized via coordinate transformation for larger $n$.
    \end{itemize}
\end{itemize}

\section*{Detailed Solution}

\subsection*{Step 1: Understanding the Point Set}
Let $N = n + 1$. The set of required points is:
\[
P_n = \{(a, b) \in \mathbb{Z}^+ \times \mathbb{Z}^+ : a + b \leq N\}.
\]
This set has $T(n) = \frac{n(n+1)}{2}$ points. Each point lies on one horizontal line ($y = b$), one vertical line ($x = a$), and one diagonal line ($x + y = s$).

\subsection*{Step 2: Maximum Coverage by Non-Sunny Lines}
Define $M(m, n)$ as the maximum number of points in $P_n$ covered by $m$ non-sunny lines. It is known that:
\[
M(m, n) = \frac{m(2n - m + 1)}{2},
\]
achieved by selecting the largest $m$ non-sunny lines (e.g., top $m$ rows). If $m = n - k$ non-sunny lines are used, the minimum number of uncovered points is:
\[
U(k) = T(n) - M(n - k, n) = \frac{k(k+1)}{2}.
\]
These $U(k)$ points must be covered by $k$ sunny lines.

\subsection*{Step 3: Impossibility for Even $k \geq 2$}
For even $k \geq 2$, $U(k) = \frac{k(k+1)}{2}$. Any sunny line can cover at most $\left\lfloor \frac{k+1}{2} \right\rfloor = \frac{k}{2}$ points from a set isomorphic to $P_k$. Thus, $k$ sunny lines cover at most:
\[
k \cdot \frac{k}{2} = \frac{k^2}{2} < \frac{k(k+1)}{2} = U(k).
\]
Hence, \textbf{no configuration exists} for even $k \geq 2$.

\subsection*{Step 4: Impossibility for Odd $k \geq 5$}
Let $k = 2t - 1$ with $t \geq 3$. Then $U(k) = t(2t - 1)$. The maximum number of points per sunny line in $P_k$ is $t$, but only 3 disjoint $t$-point sunny lines exist in $P_k$ (e.g., slopes $1$, $-2$, $-\frac{1}{2}$), covering $3t$ points. The remaining points are:
\[
t(2t - 1) - 3t = 2t(t - 2).
\]
With $k - 3 = 2t - 4$ lines left, each can cover at most $t - 1$ points, yielding maximum coverage:
\[
(2t - 4)(t - 1) = 2t^2 - 6t + 4.
\]
Since $2t(t - 2) = 2t^2 - 4t > 2t^2 - 6t + 4$ for $t \geq 3$, the remaining points cannot be covered. Thus, \textbf{no configuration exists} for odd $k \geq 5$.

\subsection*{Step 5: Possibility for $k = 0$}
Use the $n$ lines $x + y = 2, 3, \dots, n + 1$. These are all non-sunny (slope $-1$) and partition $P_n$, covering all $T(n)$ points. Hence, $k = 0$ is valid.

\subsection*{Step 6: Possibility for $k = 1$}
Use $n - 1$ horizontal lines $y = 1, 2, \dots, n - 1$, covering all points with $b \leq n - 1$. The only uncovered point is $(1, n)$ (since $a + n \leq n + 1 \implies a = 1$). Cover $(1, n)$ with a sunny line, e.g., $y = x + (n - 1)$ (slope $1$, sunny). This line is distinct from the horizontal lines, so all $n$ lines are distinct and exactly $k = 1$ is sunny.

\subsection*{Step 7: Possibility for $k = 3$}
Use $m = n - 3$ horizontal lines $y = 1, 2, \dots, m$, covering all points with $b \leq m$. The uncovered points satisfy $b \geq m + 1$, $a \geq 1$, $a + b \leq N$. Substituting $b' = b - m$, this becomes $a + b' \leq 4$, forming a subgrid isomorphic to $P_3$ with 6 points:
\[
(1,1), (1,2), (1,3), (2,1), (2,2), (3,1).
\]
Cover $P_3$ with three sunny lines:
\begin{itemize}
    \item $L_1$: $(1,1), (2,2)$ (slope $1$),
    \item $L_2$: $(1,2), (3,1)$ (slope $-\frac{1}{2}$),
    \item $L_3$: $(1,3), (2,1)$ (slope $-2$).
\end{itemize}
These lines are sunny (slopes not $0, \infty, -1$) and disjoint, covering all 6 points. Transforming back to original coordinates ($b = b' + m$), the lines remain distinct and sunny. Combined with the $m$ non-sunny horizontal lines, this gives $n$ distinct lines with exactly $k = 3$ sunny lines.

\subsection*{Step 8: Completeness of Characterization}
\begin{itemize}
    \item $k = 0$: Always possible via diagonal lines.
    \item $k = 1$: Always possible via $n - 1$ horizontal lines and one sunny line.
    \item $k = 3$: Always possible for $n \geq 3$ via the above construction.
    \item Even $k \geq 2$: Impossible due to coverage deficit.
    \item Odd $k \geq 5$: Impossible due to geometric constraints on $P_k$.
    \item $k > n$: Impossible since there are only $n$ lines.
\end{itemize}
No other values of $k$ are feasible.

\section*{Final Answer}
All nonnegative integers $k$ satisfying the conditions are $k = 0$, $k = 1$, and $k = 3$. Thus, the solution is:
\[
\boxed{k = 0,\ 1,\ 3}
\]
\end{tcolorbox}

\end{document}